    \newcommand\AG[1]{\textcolor{magenta}{[AG: #1]}}
    \newcommand\EN[1]{\textcolor{cyan}{[EN: #1]}}
    \newcommand\KG[1]{\textcolor{blue}{[KG: #1]}}
    \newcommand\GD[1]{\textcolor{green}{[GD: #1]}}
    \newcommand\AG[1]{}
    \newcommand\EN[1]{}
    \newcommand\KG[1]{}
    \newcommand\GD[1]{}
\newtheorem{theorem}{Theorem}
\newtheorem{lemma}{Lemma}[section]
\newtheorem{corollary}[lemma]{Corollary}
\newtheorem{definition}{Definition}
\newcommand{\dt}{\Delta}
\newcommand{\method}{S4ND}
\newcommand{\para}[1]{\iftoggle{arxiv}{\paragraph{#1}}{\paragraph{#1}}} 
  \newlength{\defbaselineskip}
\title{S4ND: Modeling Images and Videos as Multidimensional Signals Using State Spaces}
\author{%
  Eric Nguyen\thanks{Equal contribution.}$^\ast$$^\dagger$, Karan Goel$^\ast$$^\ddagger$, Albert Gu$^\ast$$^\ddagger$,\\
  Gordon W. Downs$^\ddagger$, Preey Shah$^\ddagger$, Tri Dao$^\ddagger$, Stephen A. Baccus$^\mathsection$, Christopher Ré$^\ddagger$\\
  $^\dagger$Department of BioEngineering, Stanford University\\
  $^\ddagger$Department of Computer Science, Stanford University\\
  $^\mathsection$Department of Neurobiology, Stanford University\\
  {\small\texttt{\{etnguyen,albertgu,gwdowns,preey,trid,baccus}\}\texttt{@stanford.edu}}\\
  {\small\texttt{\{kgoel,chrismre}\}\texttt{@cs.stanford.edu}}\\
}
\begin{document}

\maketitle

\begin{abstract}

Visual data such as images and videos are typically modeled as discretizations of inherently continuous, multidimensional signals. 
Existing continuous-signal models attempt to exploit this fact by modeling the underlying signals of visual (e.g., image) data directly.
However, these models have not yet been able to achieve competitive performance on practical vision tasks such as large-scale image and video classification.
Building on a recent line of work on deep state space models (SSMs), we propose \method, a new multidimensional SSM layer that extends the continuous-signal modeling ability of SSMs to multidimensional data including images and videos.
We show that S4ND can model large-scale visual data in $1$D, $2$D, and $3$D as continuous multidimensional signals and demonstrates strong performance by simply swapping Conv2D and self-attention layers with \method\ layers in existing state-of-the-art models.
On ImageNet-1k, \method\ exceeds the performance of a Vision Transformer baseline by $1.5\%$ when training with a $1$D sequence of patches, and matches ConvNeXt when modeling images in $2$D. For videos, S4ND improves on an inflated $3$D ConvNeXt in activity classification on HMDB-51 by $4\%$.
S4ND implicitly learns global, continuous convolutional kernels that are resolution invariant by construction, providing an inductive bias that enables generalization across multiple resolutions.
By developing a simple bandlimiting modification to S4 to overcome aliasing, S4ND achieves strong zero-shot (unseen at training time) resolution performance, outperforming a baseline Conv2D by $40\%$ on CIFAR-10 when trained on $8 \times 8$ and tested on $32 \times 32$ images.
%
When trained with progressive resizing, S4ND comes within $\sim 1\%$ of a high-resolution model while training $22\%$ faster.

\end{abstract}

\section{Introduction}
Modeling visual data such as images and videos is a canonical problem in deep learning. 
In the last few years, many modern deep learning backbones that achieve strong performance on benchmarks like ImageNet~\citep{Russakovsky2015ImageNetLS} have been proposed. 
%
These backbones are diverse, and include $1$D sequence models such as the Vision Transformer (ViT)~\citep{dosovitskiy2020image}, which treats images as sequences of patches, and $2$D and $3$D models that use local convolutions over images and videos (ConvNets)~\citep{Krizhevsky2012ImageNetCW, he2016deep,simonyan2014very, szegedy2015going, tan2021efficientnetv2,Liu2022ACF,hara2017learning,ji20123d,qiu2017learning,tran2015learning,feichtenhofer2019slowfast}.

A commonality among modern vision models capable of achieving state-of-the-art (SotA) performance is that they treat visual data as discrete pixels rather than continuous-signals. 
However, images and videos are discretizations of multidimensional and naturally {continuous} signals, sampled at a fixed rate in the spatial and temporal dimensions. 
Ideally, we would want approaches that are capable of recognizing this distinction between data and signal, and directly model the underlying continuous-signals. 
This would give them capabilities like the ability to adapt the model to data sampled at different resolutions.

\begin{figure}[!t]
    \centering
    \includegraphics[width=\linewidth]{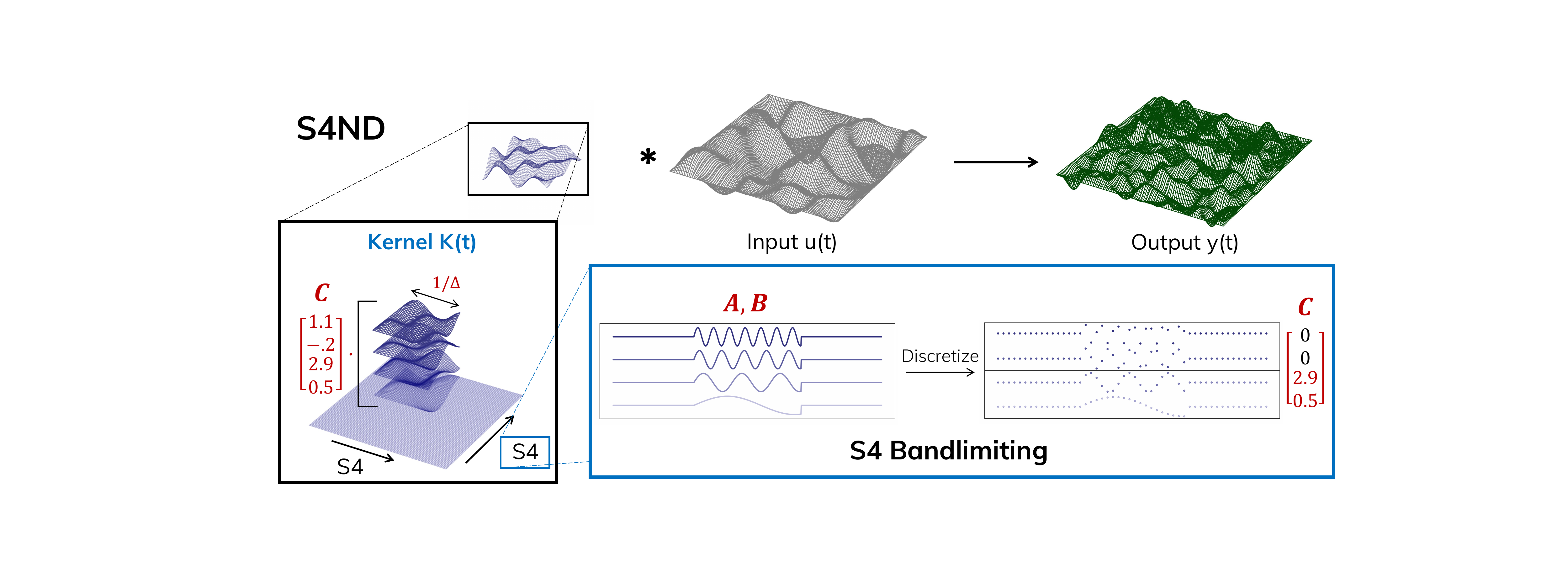}
    \caption{
      (\textbf{S4ND}.) (Parameters in \emph{red}.)
      (\emph{Top}) S4ND can be viewed as a depthwise convolution that maps a multidimensional input (\emph{black}) to output (\emph{green}) through a continuous convolution kernel (\emph{blue}).
      (\emph{Bottom Left}) The kernel can be interpreted as a linear combination (controlled by \( \bm{C} \)) of basis functions (controlled by \( \bm{A}, \bm{B} \)) with flexible width (controlled by step size \( \dt \)).
      For structured \( \bm{C} \), the kernel can further factored as a low-rank tensor product of $1$D kernels, and can be interpreted as independent S4 transformations on each dimension.
      (\emph{Bottom Right}) Choosing \( \bm{A}, \bm{B} \) appropriately yields Fourier basis functions with controllable frequencies. To avoid aliasing in the final discrete kernels, the coefficients of \( \bm{C} \) corresponding to high frequencies can simply be masked out.
    }
\label{fig:s4nd}
\end{figure}

A natural approach to building such models is to parameterize and learn continuous convolutional kernels, which can then be sampled differently for data at different resolutions~\citep{Finzi2020GeneralizingCN,Romero2021FlexConvCK,Schtt2017SchNetAC,gu2021lssl,gu2022efficiently}.
Among these, deep state space models (SSM)~\citep{gu2021lssl}, in particular S4~\citep{gu2022efficiently}, have achieved SotA results in modeling sequence data derived from continuous-signals, such as audio~\citep{goel2022sashimi}. 
However, a key limitation of SSMs is that they were developed for $1$D signals, and cannot directly be applied to visual data derived from multidimensional ``ND" signals. 
%
%
Given that $1$D SSMs outperform other continuous modeling solutions for sequence data~\citep{gu2022efficiently}, and have had preliminary success on image~\citep{gu2022efficiently} and video classification~\citep{islam2022long}, we hypothesize that they may be well suited to modeling visual data when appropriately generalized to the setting of multidimensional signals.

Our main contribution is \method, a new deep learning layer that extends S4 to multidimensional signals. 
The key idea is to turn the standard SSM (a $1$D ODE) into a multidimensional PDE governed by an independent SSM per dimension.
By adding additional structure to this ND SSM, we show that it is equivalent to an ND continuous convolution that can be factored
into a separate $1$D SSM convolution per dimension.
This results in a model that is efficient and easy to implement, using the standard $1$D S4 layer as a black box. Furthermore, it can be controlled by S4's parameterization, allowing it to model both long-range dependencies, or finite windows with a learnable window size that generalize conventional local convolutions \citep{gu2022train}.%

We show that \method{} can be used as a drop-in replacement in strong modern vision architectures while matching or improving performance in $1$D, $2$D, and $3$D.
With minimal change to the training procedure, replacing the self-attention in ViT with S4-1D improves top-$1$ accuracy by $1.5\%$, and replacing the convolution layers in a $2$D ConvNeXt backbone \citep{Liu2022ACF} with S4-2D preserves its performance on ImageNet-1k \citep{deng2009imagenet}.
Simply inflating (temporally) this pretrained S4-2D-ConvNeXt backbone to $3$D improves video activity classification results on HMDB-51 \citep{kuehne2011hmdb} by $4$ points over the pretrained ConvNeXt baseline. 
Notably, we use S4ND as global kernels that span the entire input shape, which enable it to have global context (both spatially and temporally) in every layer of a network.

Additionally, we propose a low-pass bandlimiting modification to S4 that encourages the learned convolutional kernels to be smooth. 
While \method\ can be used at any resolution, performance suffers when moving between resolutions due to aliasing artifacts in the kernel, an issue also noted by prior work on continuous models~\citep{Romero2021FlexConvCK}. 
While S4 was capable of transferring between different resolutions on audio data~\citep{gu2022efficiently},
visual data presents a greater challenge due to the scale-invariant properties of images in space and time \citep{ruderman1993statistics}, as sampled images with more distant objects are more likely to contain power at frequencies above the Nyquist cutoff frequency. Motivated by this, we propose a simple criteria that masks out frequencies in the \method\ kernel that lie above the Nyquist cutoff frequency.

\looseness=-1
The continuous-signal modeling capabilities of \method\ open the door to new training recipes, such as the ability to train and test at different resolutions.
On the standard CIFAR-10 \citep{krizhevsky2009cifar} and Celeb-A \citep{liu2015image} datasets, \method{} degrades by as little as $1.3\%$ when upsampling from low- to high-resolution data (e.g. $128\times 128 \to 160\times 160$), and can be used to facilitate progressive resizing to speed up training by $22\%$ with $\sim 1\%$ drop in final accuracy compared to training at the high resolution alone. 
We also validate that our new bandlimiting method is critical to these capabilities, with ablations showing absolute performance degradation of up to $20\%+$ without it.

\vspace{-3pt}
\section{Related Work}
\vspace{-5pt}

{\bf Image Classification.} There is a long line of work in image classification, with much of the 2010s dominated by ConvNet backbones~\citep{Krizhevsky2012ImageNetCW, he2016deep,simonyan2014very, szegedy2015going, tan2021efficientnetv2}. 
Recently, Transformer backbones, such as ViT \citep{dosovitskiy2020image}, have achieved SotA performance on images using self-attention over a sequence of $1$D patches \citep{liu2021swin, liu2021swin2, touvron2021deit, xiaohua2021scaling, ding2022davit}. 
Their scaling behavior in both model and dataset training size is believed to give them an inherent advantage over ConvNets \citep{dosovitskiy2020image}, even with minimal inductive bias. 
%
\citet{Liu2022ACF} introduce ConvNeXt, which modernizes the standard ResNet architecture~\citep{he2016deep} using modern training techniques, matching the performance of Transformers on image classification. 
We select a backbone in the $1$D and $2$D settings, ViT and ConvNeXt, to convert into continuous-signal models by replacing the multi-headed self-attention layers in ViT and the standard Conv2D layers in ConvNeXt with \method\ layers, maintaining their top-1 accuracy on large-scale image classification.

{\bf S4 \& Video Classification.} 
To handle the long-range dependancies inherent in videos, \citep{islam2022long} used 1D S4 for video classification on the Long-form Video Understanding dataset \citep{wu2021lvu}.
They first applied a Transformer to each frame to obtain a sequence of patch embeddings for each video frame independently, followed by a standard 1D S4 to model across the concatenated sequence of patches. This is akin to previous methods that learned spatial and temporal information separately \citep{karpathy2014large}, for example using ConvNets on single frames, followed by an LSTM \cite{hochreiter1997lstm} to aggregate temporal information. In contrast, modern video architectures such as $3$D ConvNets and Transformers \citep{hara2017learning,ji20123d,qiu2017learning,tran2015learning,feichtenhofer2019slowfast,kondratyuk2021movinets,liu2021videoswin,wu2021lvu,arnab2021vivit,akbari2021vatt} show stronger results when learning spatiotemporal features simultaneously, which the generalization of \method\ into multidimensions now enables us to do.

{\bf Continuous-signal Models.} Visual data are discretizations of naturally continuous signals that possess extensive structure in the joint distribution of spatial frequencies, including the properties of scale and translation invariance. For example, an object in an image generates correlations between lower and higher frequencies that arises in part from phase alignment at edges \citep{olshausen1996natural}. As an object changes distances in the image, these correlations remain the same but the frequencies shift. This relationship can potentially be learned from a coarsely sampled image and then applied at higher frequency at higher resolution.

A number of continuous-signal models have been proposed for the visual domain to learn these inductive biases, and have led to additional desirable properties and capabilities.
A classic example of continuous-signal driven processing is the fast Fourier transform, which is routinely used for filtering and data consistency in computational and medical imaging \citep{desai2021vortex}.
NeRF represents a static scene as a continuous function, allowing them to render scenes smoothly from multiple viewpoints \citep{mildenhall2020nerf}. 
CKConv \citep{romero2021ckconv} learns a continuous representation to create kernels of arbitrary size for several data types including images,
with additional benefits such as
the ability to handle irregularly sampled data. 
FlexConv \citep{Romero2021FlexConvCK} extends this work with a learned kernel size,
and show that images can be trained at low resolution and tested at high resolution if the aliasing problem is addressed.
%
S4 \citep{gu2022efficiently} increased abilities to model long-range dependancies using continuous kernels, allowing SSMs to achieve SotA on sequential CIFAR \citep{krizhevsky2009cifar}.
However, these methods including 1D S4 have been applied to relatively low dimensional data, e.g., time series, and small image datasets. \method\ is the first continuous-signal model applied to high dimensional visual data with the ability to maintain SotA performance on large-scale image and video classification. 





{\bf Progressive Resizing.} Training times for large-scale image classification can be quite long, a trend that is exacerbated by the emergence of foundation models~\citep{bommasani2021foundation}.
A number of strategies have emerged for reducing overall training time.
Fix-Res~\citep{touvron2019fixing} trains entirely at a lower resolution, and then fine-tunes at the higher test resolution to speed up training in a two-stage process.
Mix-and-Match~\citep{hoffer2019mix} randomly samples low and high resolutions during training in an interleaved manner.
An effective method to reduce training time on images is to utilize progressive resizing. This involves training at a lower resolution and gradually upsampling in stages.
%
For instance, \citet{howard2018fastai} utilized progressive resizing to train an ImageNet in under $4$ hours.
EfficientNetV2 \citep{tan2021efficientnetv2} coupled resizing with a progressively regularization schedule, increasing the regularization as well to maintain accuracy. In EfficientNetV2 and other described approaches, the models eventually train on the final test resolution.
As a continuous-signal model, we demonstrate that \method\ is naturally suited to progressive resizing, while being able to generalize to \textit{unseen} resolutions at test time.

\section{Preliminaries}
\label{sec:background}

\paragraph{State space models.}
S4 investigated state space models,
which are linear time-invariant systems that map signals \( u(t) \mapsto y(t) \) and can be represented either as a linear ODE (equation \eqref{eq:ssm})
or convolution (equation \eqref{eq:ssm-convolution}).
Its parameters are \( \bm{A} \in \mathbbm{C}^{N \times N} \) and \( \bm{B}, \bm{C} \in \mathbbm{C}^{N} \) for a state size \( N \).



\begin{minipage}{.5\linewidth}
\begin{equation}
  \label{eq:ssm}
\begin{aligned}
  x'(t) &= \bm{A}x(t) + \bm{B}u(t)  \\
  y(t)  &= \bm{C}x(t)
\end{aligned}
\end{equation}
\end{minipage}
\begin{minipage}{.5\linewidth}
\begin{equation}%
  \label{eq:ssm-convolution}
  \begin{aligned}
    K(t) &= \bm{C} e^{t\bm{A}} \bm{B} \\
    y(t) &= (K \ast u)(t) 
  \end{aligned}
\end{equation}
\end{minipage}


\paragraph{Basis functions.}

For the clearest intuition, we think of the convolution kernel as a linear combination (controlled by \( \bm{C} \)) of \textbf{basis kernels} \( K_n(t) \) (controlled by \( \bm{A}, \bm{B} \))
\begin{align}
  \label{eq:ssm-basis}
  K(t) = \sum_{k=0}^{N-1} \bm{C}_k K_k(t)
  \qquad \qquad
  K_n(t) = (e^{t\bm{A}}\bm{B})_n
\end{align}

\para{Discretization.}

\AG{TODO replace this with a discussion of width of kernel}

The SSM \eqref{eq:ssm} is defined over a continuous-time axis
and produces continuous-time convolution kernels \eqref{eq:ssm-convolution}\eqref{eq:ssm-basis}.
Given a discrete input sequence \( u_0, u_1, \dots \) sampled uniformly from an underlying signal \( u(t) \)  at a step size \( \dt \) (i.e. \( u_k = u(k\dt) \)),
the kernel can be sampled to match the rate of the input.
Note that instead of directly sampling the kernel, standard discretization rules should be applied to minimize the error from the discrete to the continuous-time kernel~\citep{gu2022efficiently}.
For inputs given at different resolutions, the model can then simply change its \( \dt \) value to compute the kernel at different resolutions.

We note that the step size \( \dt \) does not have to be exactly equal to a ``true sampling rate'' of the underlying signal, but only the relative rate matters.
Concretely, the discrete-time kernel depends only on the \emph{product} \( \dt \bm{A} \) and \( \dt \bm{B} \), and S4 learns separate parameters \( \dt, \bm{A}, \bm{B} \).
\AG{maybe cut out the above}\KG{this is not clear from context}

\para{S4.}

S4 is a special SSM with prescribed \( (\bm{A}, \bm{B}) \) matrices that define well-behaved basis functions, and an algorithm that allows the convolution kernel to be computed efficiently.
Variants of S4 exist that define different basis functions,
such as simple diagonal SSMs \citep{gupta2022diagonal},
or one that defines \textbf{truncated Fourier functions} \( K_n(t) = \sin(2\pi n t) \mathbbm{I}[(0, 1)] \) \citep{gu2022train} (\cref{fig:s4nd}).
These versions of S4 have easy-to-interpret basis functions that will allow us to control the frequencies in the kernel (\cref{sec:method:bandlimit}).



\section{Method}



We describe the proposed S4ND model for the 2D case only, for ease of notation and presentation.
The results extend readily to general dimensions;
full statements and proofs for the general case are in \cref{sec:theory-details}.
\cref{sec:method:s4nd} describes the multidimensional S4ND layer, 
and \cref{sec:method:bandlimit} describes our simple modification to restrict frequencies in the kernels.
\cref{fig:s4nd} illustrates the complete S4ND layer.

\subsection{S4ND}
\label{sec:method:s4nd}

We begin by generalizing the (linear time-invariant) SSM \eqref{eq:ssm} to higher dimensions.
Notationally, we denote the individual time axes with superscripts in parentheses.
Let \( u = u(t^{(1)}, t^{(2)}) \) and \( y = y(t^{(1)}, t^{(2)}) \) be the input and output which are signals \( \mathbbm{R}^2 \to \mathbbm{C} \),
and \( x = (x^{(1)}(t^{(1)}, t^{(2)}), x^{(2)}(t^{(1)}, t^{(2)})) \in \mathbbm{C}^{N^{(1)} \times N^{(2)}} \) be the SSM state of dimension \( N^{(1)} \times N^{(2)} \), where \( x^{(\tau)} : \mathbbm{R}^2 \to \mathbbm{C}^{N^{(\tau)}} \).
\begin{definition}[Multidimensional SSM]%
  \label{def:nd-ssm}
  Given parameters \( \bm{A}^{(\tau)} \in \mathbbm{C}^{N^{(\tau)} \times N^{(\tau)}} \), \( \bm{B}^{(\tau)} \in \mathbbm{C}^{N^{(\tau)} \times 1} \),
  \( \bm{C} \in \mathbbm{C}^{N^{(1)} \times N^{(2)}} \),
  the 2D SSM is the map \( u \mapsto y \) defined by the linear PDE with initial condition \( x(0, 0) = 0 : \)
  \begin{equation}
    \label{eq:nd-ssm}
    \begin{aligned}
      \frac{\partial}{\partial t^{(1)}} x(t^{(1)}, t^{(2)}) &= (\bm{A}^{(1)} x^{(1)}(t^{(1)}, t^{(2)}), x^{(2)}(t^{(1)}, t^{(2)})) + \bm{B}^{(1)} u(t^{(1)}, t^{(2)})
      \\
      \frac{\partial}{\partial t^{(2)}} x(t^{(1)}, t^{(2)}) &= (x^{(1)}(t^{(1)}, t^{(2)}), \bm{A}^{(2)} x^{(2)}(t^{(1)}, t^{(2)})) + \bm{B}^{(2)} u(t^{(1)}, t^{(2)})
      \\
      y(t^{(1)}, t^{(2)}) &= \langle \bm{C}, x(t^{(1)}, t^{(2)}) \rangle
    \end{aligned}
  \end{equation}
\end{definition}
Note that \cref{def:nd-ssm} differs from the usual notion of multidimensional SSM, which is simply a map from \( u(t) \in \mathbbm{C}^n \mapsto y(t) \in \mathbbm{C}^m \) for higher-dimensional \( n, m > 1 \) but still with 1 time axis.
However, \cref{def:nd-ssm} is a map from \( u(t_1, t_2) \in \mathbbm{C}^1 \mapsto y(t_1, t_2) \in \mathbbm{C}^1 \) for \emph{scalar} input/outputs but over \emph{multiple} time axes.
When thinking of the input \( u(t^{(1)}, t^{(2)}) \) as a function over a 2D grid,
\cref{def:nd-ssm} can be thought of as a simple linear PDE that just runs a standard 1D SSM
over each axis independently.

Analogous to equation \eqref{eq:ssm-convolution},
the 2D SSM can also be viewed as a multidimensional convolution.
\begin{theorem}%
  \label{thm:nd-ssm}
  \eqref{eq:nd-ssm} is a time-invariant system that is equivalent to a 2D convolution \( y = K \ast u \) by the kernel
  \begin{align}
    \label{eq:nd-kernel}
    K(t^{(1)}, t^{(2)}) &= \langle \bm{C}, (e^{t^{(1)}\bm{A^{(1)}}} \bm{B^{(1)}}) \otimes (e^{t^{(2)}\bm{A^{(2)}}} \bm{B^{(2)}}) \rangle
  \end{align}
  This kernel is a linear combination of the \( N^{(1)} \times N^{(2)} \) basis kernels
  \( \{ K^{(1)}_{n^{(1)}}(t^{(1)}) \otimes K^{(1)}_{n^{(2)}}(t^{(2)}) : n^{(1)} \in [N^{(1)}], n^{(2)} \in [N^{(2)}] \} \)
  where \( K^{(\tau)} \) are the standard 1D SSM kernels \eqref{eq:ssm-basis} for each axis .
\end{theorem}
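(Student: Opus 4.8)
The plan is to establish the two halves of the statement in turn: first that $u \mapsto y$ is linear and translation-invariant, hence expressible as a convolution, and then to identify the kernel with \eqref{eq:nd-kernel} by reducing everything to the one-dimensional formula \eqref{eq:ssm-convolution} applied one axis at a time. Linearity of $u \mapsto x \mapsto y$ is immediate from the linearity of the PDE \eqref{eq:nd-ssm}, of the readout $y=\langle\bm{C},x\rangle$, and of the zero initial condition $x(0,0)=0$. Translation invariance holds because the coefficients of \eqref{eq:nd-ssm} do not depend on $(t^{(1)},t^{(2)})$: replacing $u$ by a shift $u(\cdot-\tau^{(1)},\cdot-\tau^{(2)})$ produces the correspondingly shifted state and output, working with causal signals on $[0,\infty)^2$ (equivalently, the system is at rest before the input turns on). Since a linear translation-invariant map on $\mathbbm{R}^2$ is convolution with its impulse response, $y=K\ast u$, and it remains only to compute $K$.

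The key structural fact, which \cref{sec:theory-details} sets up rigorously, is that \cref{def:nd-ssm} runs an independent 1D SSM along each axis, so the 2D operator factors as the composition of the axis-$1$ SSM operator and the axis-$2$ SSM operator. These commute because their generators $\bm{A}^{(1)}\otimes I_{N^{(2)}}$ and $I_{N^{(1)}}\otimes\bm{A}^{(2)}$ commute — which is also precisely what makes the two lines of \eqref{eq:nd-ssm} compatible (equal mixed partials) and the PDE well-posed. By \eqref{eq:ssm-convolution}, the axis-$\tau$ SSM with parameters $(\bm{A}^{(\tau)},\bm{B}^{(\tau)})$ and output matrix $I$ is convolution along $t^{(\tau)}$ with the vector-valued kernel $e^{t^{(\tau)}\bm{A}^{(\tau)}}\bm{B}^{(\tau)}=K^{(\tau)}(t^{(\tau)})$. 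Composing a convolution along axis $1$ with one along axis $2$ is a single 2D convolution whose kernel is the outer tensor product $K^{(1)}(t^{(1)})\otimes K^{(2)}(t^{(2)})\in\mathbbm{C}^{N^{(1)}\times N^{(2)}}$, so $x=\big(K^{(1)}\otimes K^{(2)}\big)\ast u$; applying the readout and pulling the linear functional $\langle\bm{C},\cdot\rangle$ through the convolution gives $y=\big(\langle\bm{C},K^{(1)}(\cdot)\otimes K^{(2)}(\cdot)\rangle\big)\ast u$, i.e. $K$ is exactly \eqref{eq:nd-kernel}. Equivalently, one may simply posit $x(t^{(1)},t^{(2)})=\int_0^{t^{(1)}}\!\!\int_0^{t^{(2)}} \big(e^{(t^{(1)}-s^{(1)})\bm{A}^{(1)}}\bm{B}^{(1)}\big)\otimes\big(e^{(t^{(2)}-s^{(2)})\bm{A}^{(2)}}\bm{B}^{(2)}\big)\,u(s^{(1)},s^{(2)})\,ds^{(1)}ds^{(2)}$, verify by differentiating under the integral that it solves \eqref{eq:nd-ssm} with $x(0,0)=0$, and conclude by uniqueness.

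For the basis-kernel claim, write $e^{t^{(1)}\bm{A}^{(1)}}\bm{B}^{(1)}=\sum_{n^{(1)}=0}^{N^{(1)}-1}K^{(1)}_{n^{(1)}}(t^{(1)})\,e_{n^{(1)}}$ in the standard basis, with $K^{(1)}_{n^{(1)}}$ the 1D basis kernels of \eqref{eq:ssm-basis}, and likewise along axis $2$; substituting into \eqref{eq:nd-kernel} and using bilinearity of $\otimes$ and of $\langle\bm{C},\cdot\rangle$ gives $K(t^{(1)},t^{(2)})=\sum_{n^{(1)},n^{(2)}}\bm{C}_{n^{(1)}n^{(2)}}\,K^{(1)}_{n^{(1)}}(t^{(1)})\,K^{(2)}_{n^{(2)}}(t^{(2)})$, exhibiting $K$ as a linear combination of the $N^{(1)}N^{(2)}$ product basis kernels $K^{(1)}_{n^{(1)}}\otimes K^{(2)}_{n^{(2)}}$ with coefficients $\bm{C}_{n^{(1)}n^{(2)}}$. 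The general $d$-dimensional case is identical with a $d$-fold tensor product $\bigotimes_{\tau=1}^{d}e^{t^{(\tau)}\bm{A}^{(\tau)}}\bm{B}^{(\tau)}$.

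I expect the only genuinely non-routine step to be the rigorous treatment of the multi-axis state: setting up \eqref{eq:nd-ssm} as a well-posed system (checking compatibility of the two per-axis evolutions via the commutation above, so that a unique solution exists for the given initial condition) and fixing the causal/at-rest convention that makes the map translation-invariant on all of $\mathbbm{R}^2$. Once those foundations are laid, the rest is the 1D identity \eqref{eq:ssm-convolution} applied twice together with bilinearity, and the two displayed claims of the theorem follow.
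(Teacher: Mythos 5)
Your proof is correct, and its computational core --- differentiating the separable convolution kernel one axis at a time and expanding \( \langle \bm{C}, \cdot\rangle \) in the product basis \( K^{(1)}_{n^{(1)}} \otimes K^{(2)}_{n^{(2)}} \) --- is exactly the paper's. The main difference is direction: the paper sidesteps well-posedness of the PDE entirely by \emph{defining} the multidimensional SSM through the convolutional state \( x = u \ast \bigotimes_{\tau} e^{t^{(\tau)}\bm{A}^{(\tau)}}\bm{B}^{(\tau)} \) and then differentiating that convolution to recover the per-axis recurrences, whereas you start from the PDE and argue linearity plus translation invariance \( \Rightarrow \) convolution (or, in your alternative route, posit the double-integral formula and invoke uniqueness --- which is literally the paper's calculation read backwards). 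One point to be careful about if you actually carry out the differentiation against \eqref{eq:nd-ssm}: the forcing term in the axis-\( \tau \) equation comes out not as \( \bm{B}^{(\tau)} u \) but as \( \bm{B}^{(\tau)} \) tensored with the \emph{partially bound} state (the 1D SSM state of \( u \) along the remaining axes, restricted to the current slice); the appendix formalizes this with its ``partial binding'' notation \( x_{I'} \), and the main-text display is informal on exactly this point. Your operator-factorization route (compose the axis-1 and axis-2 convolutions and note that the generators \( \bm{A}^{(1)} \otimes I \) and \( I \otimes \bm{A}^{(2)} \) commute) absorbs this subtlety automatically and is arguably cleaner for \( D = 2 \); what the paper's partial-binding formulation buys is a single statement that extends verbatim to general \( D \) by peeling off one bound coordinate at a time.
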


However, a limitation of this general form is that the number of basis functions \( N^{(1)} \times N^{(2)} \times \dots \)
grows exponentially in the dimension, increasing the parameter count (of \( \bm{C} \)) and overall computation dramatically.
This can be mitigated by factoring \( \bm{C} \) as a low-rank tensor.

\begin{corollary}%
  \label{cor:nd-ssm-factored}
  Suppose that \( \bm{C} \in \mathbbm{C}^{N^{(1)} \times N^{(2)}} \) is a low-rank tensor
  \(
    \bm{C} = \sum_{i=1}^r \bm{C_i}^{(1)} \otimes \bm{C_i}^{(2)}
  \)
  where each \( \bm{C^{(\tau)}}_i \in \mathbbm{C}^{N^{(\tau)}} \).
  Then the kernel \eqref{eq:nd-kernel} also factors as a tensor product of 1D kernels
  \begin{align*}
    K(t^{(1)}, t^{(2)}) &= \sum_{i=1}^r K^{(1)}_i(t^{(1)}) \otimes K^{(2)}_i(t^{(2)}) := \sum_{i=1}^r (\bm{C_i^{(1)}} e^{t^{(2)}\bm{A^{(1)}}} \bm{B^{(1)}}) \otimes ( \bm{C_i^{(2)}} e^{t^{(2)}\bm{A^{(2)}}} \bm{B^{(2)}})
  \end{align*}
\end{corollary}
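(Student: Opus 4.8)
The plan is to read off the factorization directly from the kernel formula \eqref{eq:nd-kernel} of \cref{thm:nd-ssm}, using only bilinearity of the pairing together with the multiplicativity of inner products on rank-one tensors; no new analysis is needed. Writing $v^{(\tau)}(t^{(\tau)}) := e^{t^{(\tau)}\bm{A^{(\tau)}}}\bm{B^{(\tau)}} \in \mathbbm{C}^{N^{(\tau)}}$, \cref{thm:nd-ssm} says
\[
  K(t^{(1)},t^{(2)}) = \langle \bm{C},\; v^{(1)}(t^{(1)}) \otimes v^{(2)}(t^{(2)}) \rangle ,
\]
so everything reduces to understanding how $\langle \bm{C}, \cdot \rangle$ interacts with the assumed rank-$r$ decomposition of $\bm{C}$.

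First I would substitute $\bm{C} = \sum_{i=1}^r \bm{C_i^{(1)}} \otimes \bm{C_i^{(2)}}$ and pull the sum outside the pairing by linearity in the first argument, obtaining $K(t^{(1)},t^{(2)}) = \sum_{i=1}^r \langle \bm{C_i^{(1)}} \otimes \bm{C_i^{(2)}},\, v^{(1)}(t^{(1)}) \otimes v^{(2)}(t^{(2)}) \rangle$. The key step is then the elementary identity for the standard inner product on $\mathbbm{C}^{N^{(1)} \times N^{(2)}} \cong \mathbbm{C}^{N^{(1)}} \otimes \mathbbm{C}^{N^{(2)}}$, namely $\langle a \otimes b,\, c \otimes d \rangle = \langle a, c\rangle\, \langle b, d\rangle$, which is immediate by expanding both sides in coordinates (and in the Hermitian convention any conjugation simply distributes over the two factors, leaving the identity unchanged). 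Applying this termwise gives
\[
  K(t^{(1)},t^{(2)}) = \sum_{i=1}^r \langle \bm{C_i^{(1)}},\, v^{(1)}(t^{(1)})\rangle \cdot \langle \bm{C_i^{(2)}},\, v^{(2)}(t^{(2)})\rangle .
\]

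Finally I would recognize each scalar factor $\langle \bm{C_i^{(\tau)}},\, e^{t^{(\tau)}\bm{A^{(\tau)}}}\bm{B^{(\tau)}}\rangle$ as precisely the $1$D S4 convolution kernel \eqref{eq:ssm-convolution} for the parameter triple $(\bm{A^{(\tau)}}, \bm{B^{(\tau)}}, \bm{C_i^{(\tau)}})$, i.e.\ $K^{(\tau)}_i(t^{(\tau)})$, and re-express the product of two one-variable scalars as the separable product (outer product) function $K^{(1)}_i \otimes K^{(2)}_i$, which is exactly the claimed formula. I do not expect a genuine obstacle here: the only things that require care are purely notational — keeping the two uses of ``$\otimes$'' distinct (a Kronecker product of parameter vectors on the left, a separable product of scalar functions on the kernel side) and fixing one consistent bilinear-vs-Hermitian convention for $\langle\cdot,\cdot\rangle$ so the conjugation bookkeeping is unambiguous. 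For the general $D$-dimensional version stated in \cref{sec:theory-details}, the identical argument applies verbatim using $\langle \bigotimes_{\tau} a^{(\tau)},\, \bigotimes_{\tau} b^{(\tau)}\rangle = \prod_{\tau} \langle a^{(\tau)}, b^{(\tau)}\rangle$.
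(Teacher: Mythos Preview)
Your proposal is correct and is essentially the same as the paper's argument: the paper dispatches \cref{cor:nd-ssm-factored} in one line as ``an immediate consequence of the Kronecker mixed-product identity $(A \otimes B)(C \otimes D) = (AC) \otimes (BD)$,'' which is exactly the identity $\langle a \otimes b, c \otimes d\rangle = \langle a,c\rangle\langle b,d\rangle$ you invoke (specialized to row-by-column products). Your write-up simply unpacks the same step with more detail and the same notational caveats about $\otimes$ and the pairing convention.
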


In our experiments, we choose \( \bm{C} \) as a rank-1 tensor, but the rank can be freely adjusted to tradeoff parameters and computation for expressivity.
Using the equivalence between \eqref{eq:ssm} and \eqref{eq:ssm-convolution},
\cref{cor:nd-ssm-factored} also has the simple interpretion as defining an independent 1D SSM along each axis of the multidimensional input.

\subsection{Resolution Change and Bandlimiting}
\label{sec:method:bandlimit}

SSMs in 1D have shown strong performance in the audio domain,
and can nearly preserve full accuracy when tested zero-shot on inputs sampled at very different frequencies \citep{gu2022efficiently}.
This capability relies simply on scaling \( \dt \) by the relative change in frequencies
(i.e., if the input resolution is doubled, halve the SSM's \( \dt \) parameter).
However, sampling rates in the spatial domain are often much lower than temporally,
leading to potential aliasing when changing resolutions.
A standard technique to avoid aliasing is to apply a low-pass filter to remove frequencies above the Nyquist cutoff frequency.

For example, when \( \bm{A} \) is diagonal with \( n \)-th element \( \bm{a}_n \), each basis function has simple form
\( K_n(t) = e^{t \bm{a}_n}\bm{B}_n \).
Note that the frequencies are mainly controlled by the imaginary part of \( \bm{a}_n \).
We propose the following simple method:
for any \( n \) such that
\( \bm{a}_n \cdot \dt < \frac{1}{2}\alpha \),
mask out the corresponding coefficient of the linear combination \( \bm{C}_n \) (equation \eqref{eq:ssm-basis}).
Here \( \alpha \) is a hyperparameter that controls the cutoff;
theoretically, \( \alpha=1.0 \) corresponds to the Nyquist cutoff if the basis functions are pure sinusoids.
However, due to the decay \( e^{\Re(\bm{a}_n)} \) arising from the real part
as well as approximations arising from using finite-state SSMs,
\( \alpha \) often has to be set lower empirically.

\newcommand{\lowres}{$\mathrm{low}$}
\newcommand{\midres}{$\mathrm{mid}$}
\newcommand{\baseres}{$\mathrm{base}$}
\newcommand{\tinyres}{$\mathrm{tiny}$}

\section{Experiments}
\label{sec:experiments}

We evaluate \method\ on large-scale image classification in \cref{sec:image-classification} in the $1$D and $2$D settings, followed by activity classification in videos in \cref{sec:video-classification} in the $3$D setting,
where using \method{} as a drop-in replacement for standard deep learning layers matches or improves performance in all settings.
In \cref{sec:continuous-signal}, we performed controlled ablations to highlight the benefits of \method\ as a continuous-signal model in images.

\begin{figure}[!t]
    \centering
    \includegraphics[width=\linewidth]{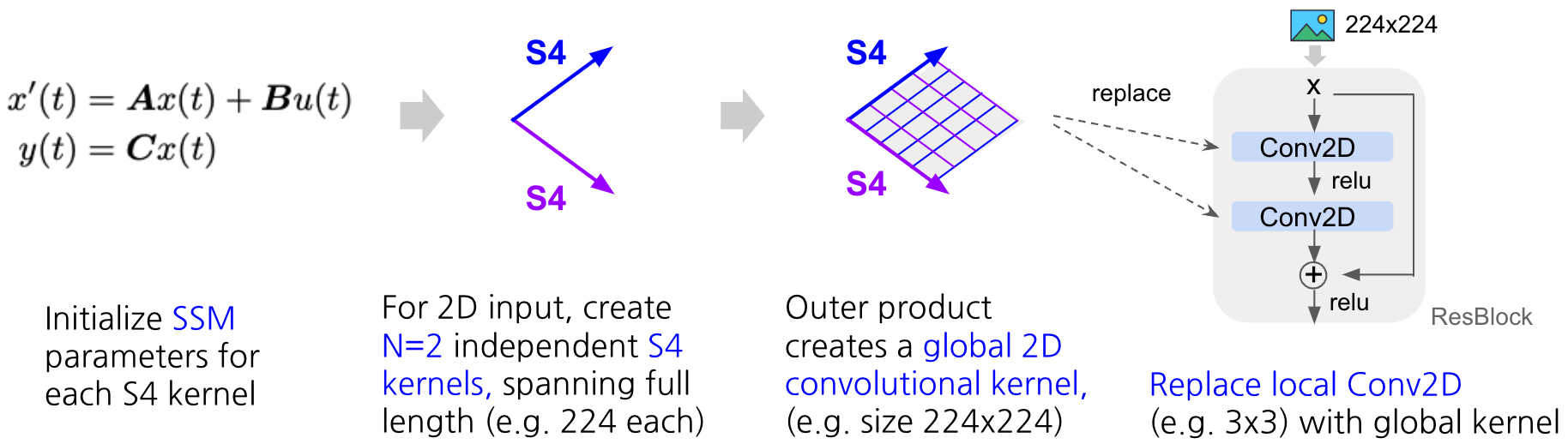}
    \caption{
      (\textbf{Flowchart of S4ND for images: 2D example}.) S4ND can process images as $2$D inputs by initializing an SSM per spatial dimension $x$ and $y$ of the input. Two independent S4 kernels are then instantiated that span the entire input lengths of each dimension (e.g., 224 as shown above). Computing an outer product of the two $1$D kernels produces a global convolutional kernel (e.g., 224x224). This global kernel can replace standard local Conv2D layers where ever they are used, such as ResNet or ConvNeXt blocks. A similar procedure can be done in $3$D (with 3 S4 kernels) to create $3$D global kernels for videos. 
    }
\label{fig:pipeline}
\end{figure}

\subsection{S4ND in 1D \& 2D: Large-scale Image Classification}
\label{sec:image-classification}

First, we show that \method\ is a drop-in replacement for existing visual modeling layers such as $1$D self-attention and $2$D local convolutions, with no degradation in top-$1$ performance when used in modern backbones such as ViT~\citep{dosovitskiy2020image} and ConvNeXt~\citep{Liu2022ACF} on ImageNet-1k~\citep{deng2009imagenet}.

\begin{wraptable}[16]{R}{0.56\textwidth}  
      \small
      \caption{{(\bf Performance on image classification.)}
        Top-1 test accuracy benchmarks for images in the $1$D and $2$D settings. ConvNeXt-M, for ``micro", is a reduced model size for Celeb-A, while ``-ISO" is an isotropic S4ND backbone~\citep{gu2022efficiently}.
      }
    \vspace{3mm}
    \centering
        \begin{tabular}{llcc}
            \toprule
            \textsc{Model} & \textsc{Dataset} & \textsc{Params} & \textsc{Acc}\\
            \midrule
            ViT-B & ImageNet & 88.0M & 78.9 \\
            S4ND-ViT-B & ImageNet & 88.8M & \textbf{80.4} \\
            \midrule
            ConvNeXt-T & ImageNet & 28.4M & 82.1 \\
            S4ND-ConvNeXt-T & ImageNet & 30.0M & \textbf{82.2} \\
            \midrule
            Conv2D-ISO & CIFAR-10 & 2.2M & 93.7 \\
            S4ND-ISO & CIFAR-10 & 5.3M & \textbf{94.1} \\
            \midrule
            ConvNeXt-M & Celeb-A & 9.2M & 91.0 \\
            S4ND-ConvNeXt-M & Celeb-A & 9.6M & \textbf{91.3} \\
            \bottomrule
        \end{tabular}
        \label{tab:classification1}
\end{wraptable}

\para{Baselines and Methodology.} 
We consider large-scale image classification on the ImageNet-1k dataset, which consists of $1000$ classes and $1.3$M images.
We start with two strong baselines: ViT-B (\baseres, $88$M parameters) for processing images in the $1$D setting and ConvNeXt-T (\tinyres, $28.4$M) in the $2$D setting. (We omit the postfix ``B" and ``T" for brevity). 
More recent works using Transformers on images have surpassed ViT, but we focus on the original ViT model to highlight specifically the drop-in capability and performance difference in self-attention vs. \method\ layers. 
We first swap the self-attention layers in ViT with S4ND layers, and call this model S4ND-ViT. 
Notably, we simplify ViT by removing the positional encodings, as \method\ does not require injecting this inductive bias. Similarly, we swap the local Conv2D layers in the ConvNeXt blocks with S4ND layers, which we call S4ND-ConvNeXt, a model with global context at each layer.
%
%
%
Both \method\ variants result in similar parameter counts compared to their baseline models.

\para{Training.} For all ImageNet models, we train from scratch with no outside data and adopt the training procedure from \citep{touvron2021deit, yuan2021tokens}, which uses the AdamW optimizer \citep{loshchilov2017decoupled} for 300 epochs, cosine decay learning rate, weight decay 0.05, and aggressive data augmentations including RandAugment \citep{cubuk2020randaugment}, Mixup \citep{zhang2017mixup}, and AugMix \citep{hendrycks2019augmix}. We add RepeatAug \citep{repeataug} for ConvNeXt and \method-ConvNeXt. The initial learning rate for ViT (and S4ND-ViT) is $0.001$, while for ConvNeXt (and S4ND-ConvNeXt) it is $0.004$. See \cref{appendix:expt-details-image} for additional training procedure details.

\para{Results.} Table \ref{tab:classification1} shows top-$1$ accuracy results for each model on ImageNet. After reproducing the baselines, S4ND-ViT was able to moderately boost performance by +$1.5$\% over ViT, while S4ND-ConvNeXt matched the original ConvNeXt's performance. This indicates that S4ND is a strong primitive that can replace self-attention and standard $2$D convolutions in practical image settings with large-scale data.

\subsection{S4ND in 3D: Video Classification}
\label{sec:video-classification}

Next, we demonstrate the flexible capabilities of \method{} in settings involving pretraining and even higher-dimensional signals.
We use the activity recognition dataset HMDB-51 \citep{kuehne2011hmdb} which involves classifying videos in $51$ activity classes.

\para{Baselines and Methodology.}
Prior work demonstrated that $2$D CNNs (e.g. pretrained on ImageNet) can be adapted to $3$D models by $2$D to $3$D kernel \emph{inflation} (I3D \citep{carreira2017i3d}), in which the 2D kernels are repeated temporally $N$ times and normalized by $1/N$. Our baseline, which we call ConvNeXt-I3D, uses the $2$D ConvNeXt pretrained on ImageNet (\cref{sec:image-classification}) with I3D inflation. Notably, utilizing \method{} in $3$D enables global context \emph{temporally} as well. We additionally test the more modern spatial-temporal separated $3$D convolution used by S3D \citep{xie2017s3d} and R(2+1)D \citep{tran2018r21}, which factor the $3$D convolution kernel as the outer product of a $2$D (spatial) by $1$D (temporal) kernel. Because of its flexible factored form (\cref{sec:method:s4nd}), \method{} automatically has these inflation capabilities. We inflate the pretrained \method{}-ConvNeXt simply by loading the pretrained $2$D model weights for the spatial dimensions, and initializing the temporal kernel parameters \( \bm{A}^{(3)}, \bm{B}^{(3)}, \bm{C}^{(3)} \) from scratch. We note that this model is essentially identical to the baseline ConvNeXt-S3D except that each component of the factored kernels use standard $1$D S4 layers instead of 1D local convolutions. Finally, by varying the initialization of these parameters, we can investigate additional factors affecting model training;
in particular, we also run an ablation on the kernel timescales \( \dt \).


\para{Training.}
Our training procedure is minimal, using only RGB frames (no optical flow).
We sample clips of $2$ seconds with $30$ total frames at $224 \times 224$, followed by applying RandAugment; we performed a small sweep of the RandAugment magnitude for each model.
All models are trained with learning rate $0.0001$ and weight decay \( 0.2 \). Additional details are included in \cref{appendix:expt-details-video}.

\begin{table}[ht]
\vspace{-1mm} 
      \caption{
        (\textbf{HMDB-51 Activity Recognition with ImageNet-pretrained models.})
        (\emph{Left}) Top-1 accuracy with $2$D to $3$SD kernel inflation.
        (\emph{Right}) Ablation of initial temporal kernel lengths, controlled by S4's $\dt$ parameter.
      }
\vspace{1mm}
\hspace{.1\linewidth}
    \begin{minipage}[c]{0.5\textwidth}
      \small
        \centering
        \begin{tabular}{@{}llll@{}}
            \toprule
            & \textsc{Params} & \textsc{Flow} & \textsc{RGB} \\
            \midrule
            Inception-I3D & 25.0M & 61.9 & 49.8 \\
            \midrule
            ConvNeXt-I3D & 28.5M & - & 58.1 \\
            ConvNeXt-S3D & 27.9M & - & 58.6 \\
            S4ND-ConvNeXt-3D & 31.4M & - & \textbf{62.1} \\
            \bottomrule
        \end{tabular}
    \end{minipage}
\hspace{.01\linewidth}
\begin{minipage}[c]{0.3\textwidth}
        \small
        \centering
        \begin{tabular}{@{}ll@{}}
            \toprule
            \textsc{Init. Length} & \textsc{Acc} \\
            \midrule
            20.0 & 53.74 \\
            4.0 & 58.33 \\
            2.0 & 60.30 \\
            1.0 & 62.07 \\
            \bottomrule
        \end{tabular}
    \end{minipage}
        \label{tab:hmdb}
\end{table}

\para{Results.}

Results are presented in \cref{tab:hmdb}.
Our baselines are much stronger than prior work in this setting,
$8\%$ top-$1$ accuracy higher than the original I3D model in the RGB frames only setting,
and confirming that separable kernels (S3D) perform at least as well as standard inflation ($+0.53\%$). \method{}-ConvNeXt-3D improves over the baseline ConvNeXt-I3D by $+4.0\%$ with no difference in models other than using a temporal S4 kernel. This even exceeds the performance of I3D when trained on optical flow.

Finally, we show how \method 's parameters can control for factors such as the kernel length (\cref{tab:hmdb}).
Note that our temporal kernels $K^3(t^3)$ are always full length (30 frames in this case), while standard convolution kernels are shorter temporally and require setting the width of each layer manually as a hyperparameter~\citep{xie2017s3d}.
S4 layers have a parameter $\dt$ that can be interpreted such that $\frac{1}{\dt}$ is the expected length of the kernel.
By simply adjusting this hyperparameter, \method\ can be essentially initialized with length-$1$ temporal kernels that can automatically learn to cover the whole temporal length if needed.
We hypothesize that this contributes to \method{}'s improved performance over baselines.

\begin{table}[b]
    \centering
    \caption{({\bf Settings for continuous capabilities experiments.)} Datasets and resolutions used for continuous capabilities experiments, as well as the model backbones used are summarized.}
    \vspace{1.5mm}
    \begin{tabular}{@{}llllll@{}}
    \toprule
        \textsc{Dataset} & \textsc{Classes} & \multicolumn{3}{c}{\textsc{Resolution}}  & \textsc{Backbone} \\
        \cmidrule(lr){3-5}
       & & \baseres & \midres & \lowres & \\
       \midrule
       CIFAR-10 & $10$ & $32 \times 32$ & $16 \times 16$ $(2\times)$ & $8 \times 8$ $(4\times)$ & Isotropic\\
       Celeb-A & $40$ multilabel & $160 \times 160$ & $128 \times 128$ $(1.25\times)$ & $64 \times 64$ $(2.50\times)$ & ConvNeXt\\
       \bottomrule
    \end{tabular}
    \label{tab:dataset-resolutions}
\end{table}

\begin{table}[t]
    \centering
    \caption{{\bf Zero-Shot Resolution Change.} Results for models trained on one resolution (one of \lowres\ / \midres\ / \baseres), and zero-shot tested on another. Results are averaged over 2 random seeds. \KG{comment out stdevs for this table for submission}}
    \vspace{1.5mm}
    \resizebox{\columnwidth}{!}{
    \begin{tabular}{ccccccc}
         \toprule
         \multicolumn{2}{c}{\textsc{Resolution}} & \multicolumn{3}{c}{\textsc{CIFAR-10}} & \multicolumn{2}{c}{\textsc{Celeb-A}} \\
         \cmidrule(lr){1-2} \cmidrule(lr){3-5} \cmidrule(lr){6-7}
         \textsc{Train}    & \textsc{Test}     & \textsc{\method}         & \textsc{Conv2D}         & \textsc{FlexNet-16}     & \textsc{\method}         & \textsc{Conv2D} \\
         \midrule
         \baseres & \baseres & $\mathbf{93.10 \pm 0.22}$ & $91.9 \pm 0.2$ & $92.2 \pm 0.1$ & $\mathbf{91.75 \pm 0.00}$ & $91.44 \pm 0.03$ \\
         \midrule
         \midres  & \midres  & $\mathbf{88.80 \pm 0.12}$ & $87.2 \pm 0.1$ & $86.5 \pm 2.0$ & $\mathbf{91.63 \pm 0.04}$ & $91.09 \pm 0.08$ \\
         \midres  & \baseres & $\mathbf{88.77 \pm 0.03}$ & $73.1 \pm 0.3$ & $82.7 \pm 2.0$ & $\mathbf{90.14 \pm 0.38}$ & $80.52 \pm 0.08$ \\
         \midrule
         \lowres  & \lowres  & $\mathbf{78.17 \pm 0.13}$ & $76.0 \pm 0.2$ & -              & $\mathbf{90.95 \pm 0.02}$ & $90.37 \pm 0.04$ \\
         \lowres  & \midres  & $\mathbf{78.86 \pm 0.22}$ & $57.4 \pm 0.3$ & -              & $\mathbf{84.44 \pm 1.04}$ & $80.45 \pm 0.11$ \\
         \lowres  & \baseres & $\mathbf{73.71 \pm 0.47}$ & $33.1 \pm 1.3$ & -              & $\mathbf{84.73 \pm 0.54}$ & $80.59 \pm 0.14$ \\
         \bottomrule
    \end{tabular}
    }
    \label{tab:zero-shot-resolution-change-table}
\end{table}


\subsection{Continuous-signal Capabilities for Images}
\label{sec:continuous-signal}

Images are often collected at varied resolutions, even with the same hardware, so it is desirable that models generalize to data sampled at different resolutions. 
We show that \method\ inherits this capability as a continuous-signal model, with strong zero-shot performance when changing resolutions, and the ability to train with progressively resized multi-resolution data.
We perform an ablation to show that our proposed bandlimiting modification is critical to achieving strong performance when changing resolutions.

\textbf{Setup.}
We focus on image classification on $2$D benchmark datasets: a dataset with low-resolution images (CIFAR-10) and one with higher-resolution images (Celeb-A). 
For each dataset, we specify a base image resolution (\baseres), and two lower resolutions (\midres, \lowres), summarized in \cref{tab:dataset-resolutions}.
To highlight that \method 's continuous capabilities are independent of backbone, we experiment with $2$ different $2$D backbones: an isotropic, fixed-width model backbone on CIFAR-10 \citep{krizhevsky2009cifar} and a small ConvNeXt backbone on Celeb-A \cite{liu2015image}.
For each backbone, we compare \method 's performance to Conv2D layers as a standard, widely used baseline. Additional details can be found in \cref{appendix:expt-details-continuous}.

We first verify that \method\ achieves comparable test classification performance with baseline Conv2D models. \cref{tab:zero-shot-resolution-change-table}  and \cref{fig:cifar-zeroshot-chart} show that we exceed the performance of Conv2D on both tasks, with \method\ improving over Conv2D models by $0.4\%$ on CIFAR-10 and $0.3\%$ on Celeb-A. 


{\bf Zero-Shot Resolution Change.} We train \method\ and Conv2D models at either \lowres\ or \midres\ resolution, and test them at the \baseres\ resolution for each dataset. 
We also compare to FlexConv \citep{Romero2021FlexConvCK} on CIFAR-10, which is the current SotA for zero-shot resolution change.
Compared to training and testing at the \baseres\ resolution, we expect that Conv2D should degrade more strongly than \method, since it cannot adapt its kernel appropriately to the changed resolution. 
\cref{tab:zero-shot-resolution-change-table} and \cref{fig:cifar-zeroshot-chart} show that \method\ outperforms Conv2D on \midres\ $\rightarrow$ \baseres\ by $15+$ points and \lowres\ $\rightarrow$ \baseres\ by $40+$ points on CIFAR-10, and $9+$ points and $3+$ points on Celeb-A. 
In fact, \method\ \newline

\begin{wrapfigure}[22]{rt}{0.5\textwidth}  
  \includegraphics[width=0.5\textwidth]{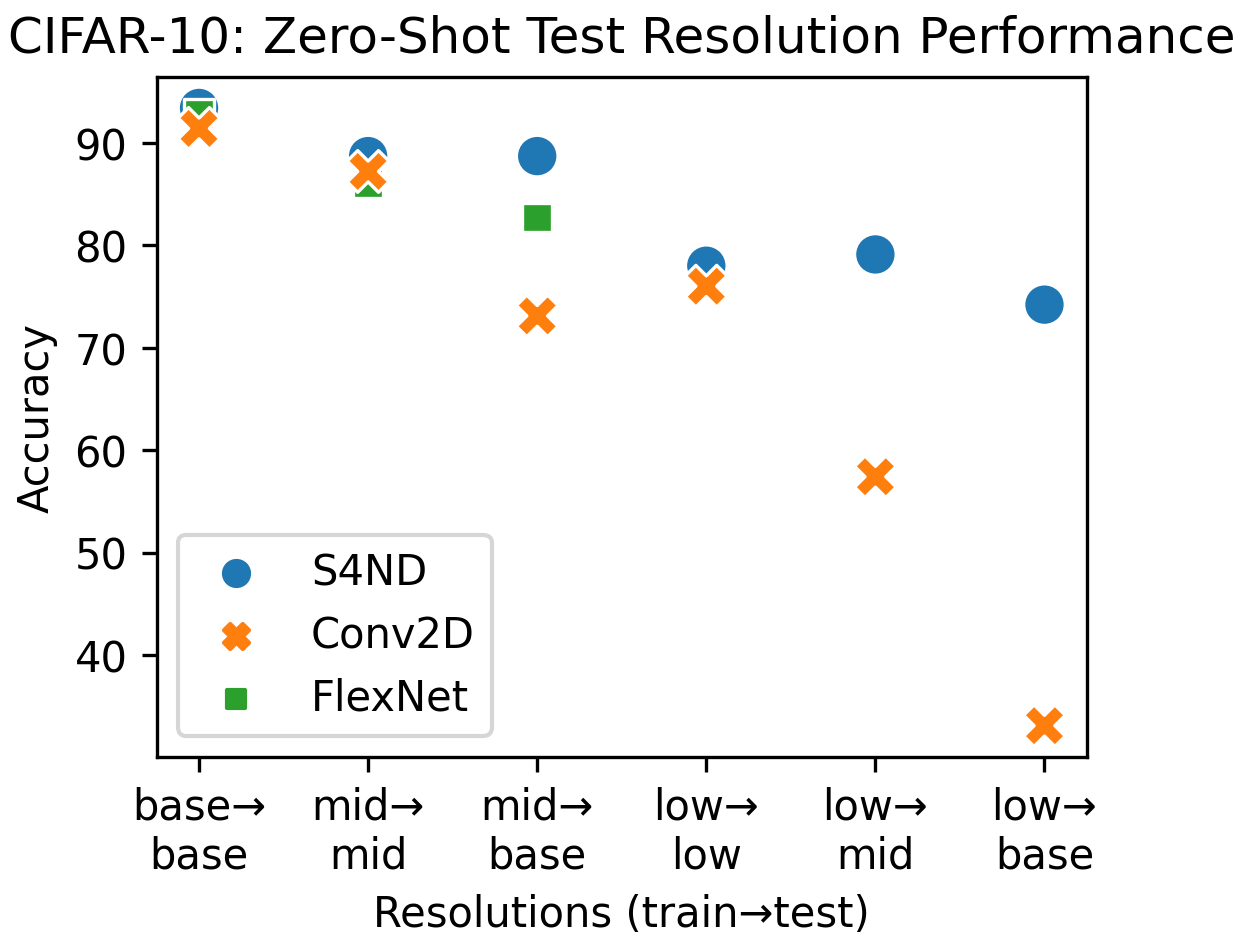}
  \captionsetup{type=figure}
  \caption{
      (\textbf{CIFAR-10 zero-shot comparison.}) When trained and tested on the same resolutions, all models have similar performance (with S4ND slightly better). But, when trained and tested on different resolutions (the zero-shot setting), S4ND significantly outperforms Conv2D and FlexNet.
  }
      \label{fig:cifar-zeroshot-chart}
\end{wrapfigure}

yields better performance on \lowres\ $\rightarrow$ \baseres\ (a more difficult task) than Conv2D does from \midres\ $\rightarrow$ \baseres\ (an easier task), improving by $1.1\%$ on CIFAR-10 and $3\%$ on Celeb-A.
Compared to FlexConv on CIFAR-10 \midres\ $\rightarrow$ \baseres, \method\ improves zero-shot performance by $5+$ points, setting a new SotA.

\para{Progressively Resized Training.} 
We provide an exploration of training with progressive resizing~\citep{howard2018fastai, tan2021efficientnetv2} i.e. training in multiple stages at different resolutions. The only change we make from standard training is to reset the learning rate scheduler at the beginning of each stage (details in \cref{appendix:expt-details-continuous}).
We compare \method\ and the Conv2D baseline with progressive resizing in \cref{tab:progressive-resizing-results}.

For CIFAR-10, we train with a \lowres\ $\rightarrow$ \baseres, $80-20$ epoch schedule, and perform within $\sim 1\%$ of an \method\ model trained with \baseres\ resolution data while speeding up training by $21.8\%$.
We note that Conv2D attains much higher speedups as a consequence of highly optimized implementations, which we discuss in more detail in~\cref{sec:discussion}. 
For Celeb-A, we explore flexibly combining the benefits of both progressive resizing and zero-shot testing, training with a \lowres\ $\rightarrow$ \midres, $16-4$ epoch schedule that uses no \baseres\ data. We outperform Conv2D by $7.5\%+$, and attain large speedups of $50\%+$ over training at the \baseres\ resolution.

\begin{table}[ht!]
  \small
  \caption{
    (\textbf{Progressive resizing results.}) Validation performance for progressively resized training at \baseres\ resolution, and speedup compared to training at \baseres\ resolution on CIFAR-10 and Celeb-A. We use a $80 - 20$ and $16-4$ schedule for CIFAR-10 and Celeb-A, and also report performance training only at \baseres\ resolution.
  }
  \vspace{1.5mm}
  \centering
  \resizebox{\linewidth}{!}{
  \begin{tabular}{@{}llllll@{}}
    \toprule
    \textsc{Dataset} & \textsc{Model} & \textsc{Epoch Schedule} & \textsc{Train Resolution} & \textsc{Val @ Base Res.} & \textsc{Speedup (Step Time)} \\
    \midrule
    CIFAR-10 & Conv2D & -       & \baseres                         & \( 91.90\% \)          & $0\%$    \\
             & S4ND   & -       & \baseres                         & \( \mathbf{93.40}\% \) & $0\%$    \\
             \cmidrule{2-6}
             & Conv2D & $80-20$ & \lowres \(\rightarrow\) \baseres & \( 90.94\% \)          & $51.7\%$ \\
             & S4ND   & $80-20$ & \lowres \(\rightarrow\) \baseres & \( \mathbf{92.32}\% \) & $21.8\%$ \\
    \midrule
    CelebA   & Conv2D & -       & \baseres                         & \( 91.44\% \)          & $0\%$    \\
             & S4ND   & -       & \baseres                         & \( \mathbf{91.75}\% \) & $0\%$    \\
             \cmidrule{2-6}
             & Conv2D & $16-4$  & \lowres \(\rightarrow\) \midres  & \( 80.89\% \)          & $76.7\%$ \\
             & S4ND   & $16-4$  & \lowres \(\rightarrow\) \midres  & \( \mathbf{88.57}\% \) & $57.3\%$ \\
    \bottomrule
  \end{tabular}
  }
  \label{tab:progressive-resizing-results}
\end{table}

\begin{figure}[ht!]
\begin{minipage}{.6\linewidth}%
  \centering
  \includegraphics[width=\linewidth]{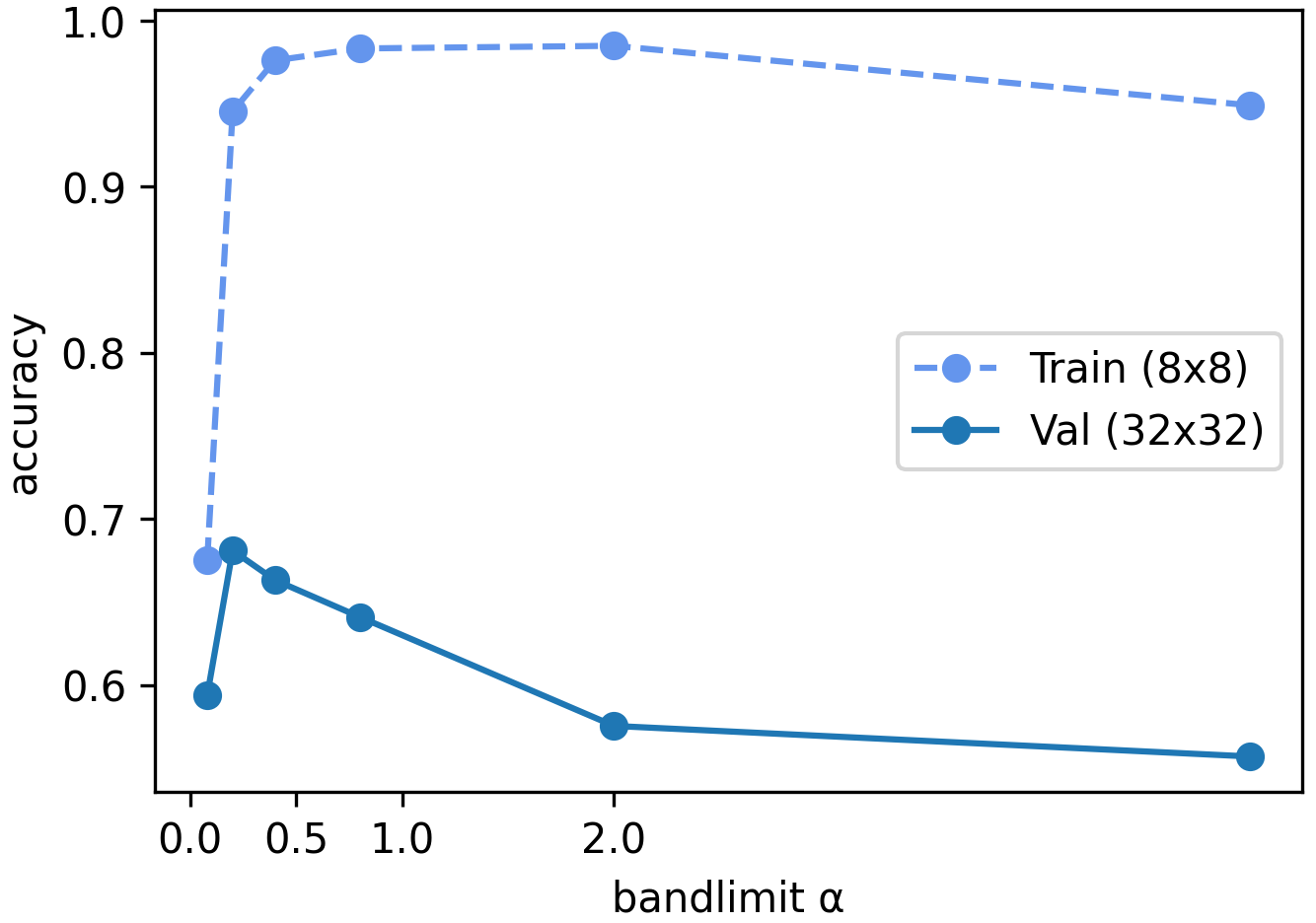}
  \captionsetup{type=figure}
  \caption{
      (\textbf{Bandlimiting ablation.})
      Zero-shot performance training at \( 8\times 8 \) and evaluating at \( 32\times 32 \). 
      Train performance is high for large enough values of \( \alpha \ge 0.5 \), but validation performance goes down as aliasing occurs in the kernel.
      Both train and validation drop for lower values of \( \alpha \), as it limits the expressivity of the kernel.
  }
  \label{fig:bandlimiting-param-ablation}
\end{minipage}
\hfill
\begin{minipage}{.35\linewidth}%
  \centering
  \begin{subfigure}{.49\linewidth}%
    \includegraphics[width=\linewidth]{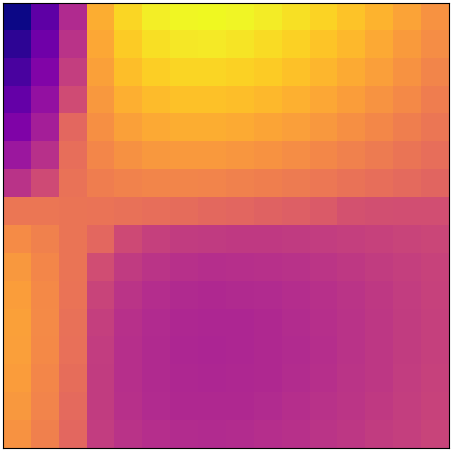}
    \caption{\( \alpha = 0.4 \) \\ \quad \( 16 \times 16 \) kernel}
  \end{subfigure}
  \hfill
  \begin{subfigure}{.49\linewidth}%
    \includegraphics[width=\textwidth]{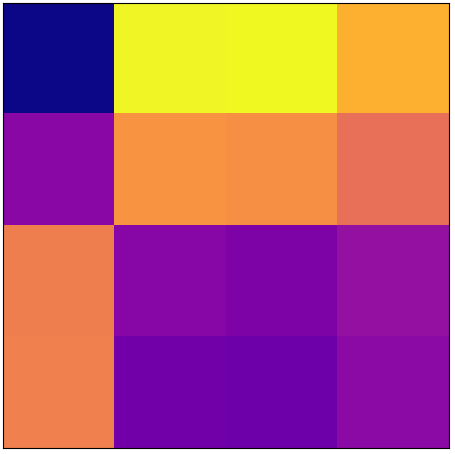}
    \caption{\( \alpha = 0.4 \) \\ \( 4 \times 4 \) kernel}
  \end{subfigure}
  \\
  \begin{subfigure}{.49\linewidth}%
    \includegraphics[width=\textwidth]{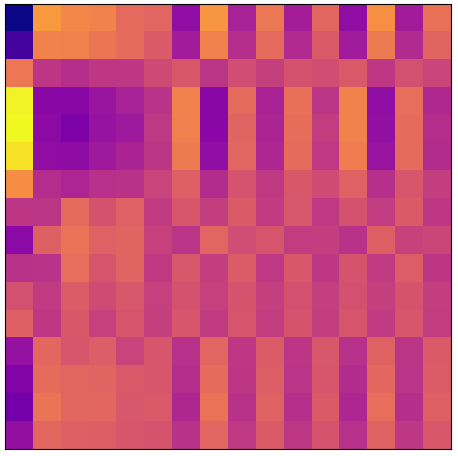}
    \caption{\( \alpha = \infty \) \\ \( 16 \times 16 \) kernel}
  \end{subfigure}
  \begin{subfigure}{.49\linewidth}%
    \includegraphics[width=\textwidth]{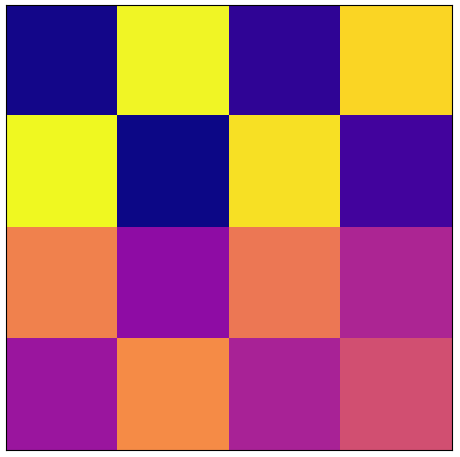}
    \caption{\( \alpha = \infty \) \\ \( 4 \times 4 \) kernel}
  \end{subfigure}
  \captionsetup{type=figure}
  \caption{
      (\textbf{Effect of bandlimiting on learned kernels.})
      Bandlimiting significantly increases the smoothness of the kernels when resizing resolutions.
  }
  \label{fig:bandlimiting-pictures}
\end{minipage}
\end{figure}

{\bf Effect of Bandlimiting.} 
Bandlimiting in \method\ is critical to generalization at different resolutions.
We analyze the effect of the bandlimiting parameter $\alpha$ on CIFAR-10 performance when doing zero-shot resolution change.
We additionally vary the choice of basis function $K_n(t)$ used in the SSM.
\cref{fig:bandlimiting-param-ablation} shows that zero-shot performance on \baseres\ degrades for larger values of $\alpha$, i.e. for cutoffs that do not remove high frequencies that violate the Nyquist cutoff. 
%
As we would expect, this holds regardless of the choice of basis function.  
In \cref{fig:bandlimiting-pictures}, we visualize learned kernels with and without bandlimiting, showing that bandlimiting improves smoothness. 
\cref{appendix:expt-details-continuous} includes additional experiments that analyze $\alpha$ on Celeb-A, and when doing progressively resized training.

\section{Discussion}
\label{sec:discussion}


\paragraph{Summary.} 

We introduced \method, a multidimensional extension of S4 that models visual data as continuous valued signals. 
\method\ is the first continuous model that matches SotA baselines on large-scale $1$D and $2$D image classification on ImageNet-1k, as well as outperforming a strong pretrained model in a $3$D video classification setting.
%
%
As a continuous-signal model, \method\ inherits useful properties that are absent from standard visual modeling approaches, such as zero-shot testing on unseen resolutions without a significant performance drop.
%

\paragraph{Limitations.} A limitation of \method\ is its training speed in high dimensions. 
In the $1$D image setting, \method-ViT has similar training speed to ViT; however, in the $2$D setting, the \method-ConvNeXt was $2\times$ slower than the baseline ConvNeXt.
We remark that vanilla local convolutions have been heavily optimized for years, and we expect that layers such as S4ND can be substantially sped up with more optimized implementations.
Our core computational primitives accounting for $65\%$ of our runtime (FFT, pointwise operations, inverse FFT) are all bottlenecked by reading from and writing to GPU memory~\citep{stvrelak2018performance}.
With a more optimized implementation that fuses these operations~\citep{dao2022flashattention} (i.e., loading the input once from GPU memory, perform all operations, then write result back to GPU memory), we expect to speed these up by 2-3$\times$. 
%
Further discussion can be found in \cref{appendix:discussion}.



\paragraph{Future work.} We presented a first step in using continuous-signal models in images and videos, and believe this opens the door to new capabilities and directions. For example, recent video benchmark datasets are significantly larger than the HMDB-51 dataset used in our experiments~\citep{kay2017kinetics, goyal2017something, damen2018scaling, monfort2019moments, sigurdsson2016hollywood}, and can be explored. In addition, we demonstrated capabilities in zero-shot resolution \textit{spatially}, but even less work has been done on zero-shot testing \textit{temporally}. This opens up an exciting new direction of work that could allow models to be agnostic to different video sampling rates as well, capable of testing on higher unseen sampling rates, or irregular (non-uniform) sampling rates. As models become larger and combine multiple modalities, \method\ shows strong promise in being able to better model underlying continuous-signals and create new capabilities across visual data, audio, time-series (e.g., wearable sensors, health data), and beyond.

\subsection*{Acknowledgments}
We thank Arjun Desai, Gautam Machiraju, Khaled Saab and Vishnu Sarukkai for helpful feedback on earlier drafts. This work was done with the support from the HAI-Google Cloud Credits Grant Program. We gratefully acknowledge the support of NIH under No. U54EB020405 (Mobilize), NSF under Nos. CCF1763315 (Beyond Sparsity), CCF1563078 (Volume to Velocity), and 1937301 (RTML); ONR under No. N000141712266 (Unifying Weak Supervision); ONR N00014-20-1-2480: Understanding and Applying Non-Euclidean Geometry in Machine Learning; N000142012275 (NEPTUNE); the Moore Foundation, NXP, Xilinx, LETI-CEA, Intel, IBM, Microsoft, NEC, Toshiba, TSMC, ARM, Hitachi, BASF, Accenture, Ericsson, Qualcomm, Analog Devices, the Okawa Foundation, American Family Insurance, Google Cloud, Salesforce, Total, the HAI-AWS Cloud Credits for Research program, the Stanford Data Science Initiative (SDSI), and members of the Stanford DAWN project: Facebook, Google, and VMWare. The Mobilize Center is a Biomedical Technology Resource Center, funded by the NIH National Institute of Biomedical Imaging and Bioengineering through Grant P41EB027060. The U.S. Government is authorized to reproduce and distribute reprints for Governmental purposes notwithstanding any copyright notation thereon. Any opinions, findings, and conclusions or recommendations expressed in this material are those of the authors and do not necessarily reflect the views, policies, or endorsements, either expressed or implied, of NIH, ONR, or the U.S. Government.

\bibliographystyle{plainnat}
\bibliography{biblio}

\newpage

\appendix

\section{Theory Details}
\label{sec:theory-details}

We formalize and generalize the notation of \cref{sec:method:s4nd} and prove the results.

For the remainder of this section, we fix a dimension \( D \) (e.g. \( D=2 \) for a 2-dimensional SSM).
The \( D \)-dimensional SSM will be a map from a function \( u : \mathbbm{R}^D \to \mathbbm{C} \) to \( y : \mathbbm{R}^D \to \mathbbm{C} \).

\begin{definition}[Indexing notation]%
  Let \( [d] \) denote the set \( \{1, 2, \dots, d\} \). Let \( [0] \) denote the empty set \( \{\} \).
  Given a subset \( I \subseteq [D] \), let \( -I \) denote its complement \( [D] \setminus I \).
  Let \( [-d] = -[d] = \{d+1, \dots, D\} \).
\end{definition}

\begin{definition}[Tensor products and contractions]%
  Given \( a \in \mathbbm{C}^{N_1} \) and \( b \in \mathbbm{C}^{N_2} \),
  let \( a \otimes b \in \mathbbm{C}^{N_1 \times N_2} \) be defined as \( (a \otimes b)_{n_1, n_2} = a_{n_1} b_{n_2} \).

  Given a tensor \( x \in \mathbbm{C}^{N_1 \times \dots \times N_D} \) and matrix \( \bm{A} \in \mathbbm{C}^{N_1 \times N_1} \),
  define \( \bm{A} \cdot^{(1)} x \in \mathbbm{C}^{N_1 \times \dots \times N_D} \) as
  \begin{align*}
    (\bm{A} \cdot^{(1)} x)_{n_1, \dots, n_D} = \sum_m \bm{A}_{n_1 m} x_{k, n_2, \dots, n_D}
    .
  \end{align*}
  which is simply a matrix multiplication over the first dimension.
  Let \( \cdot^{(\tau)} \) similarly denote matrix multiplication over any other axis.
\end{definition}

It will be easier to work directly from the convolutional definition of SSM (equation \eqref{eq:nd-kernel}).
We will then show equivalence to a PDE formulation (equation \eqref{eq:nd-ssm}).

\begin{definition}[Multidimensional SSM]%
  Let \( N_1, \dots, N_D \) be state sizes for each of the \( D \) dimensions.
  The \( D \)-dimensional SSM has parameters \(  \bm{A}^{(\tau)} \in \mathbbm{C}^{N^{(\tau)} \times N^{(\tau)}} \), \( \bm{B}^{(\tau)} \in \mathbbm{C}^{N^{(\tau)}} \), and \( \bm{C} \in \mathbbm{C}^{N^{(1)} \times \dots \times N^{(D)}} \) for each dimension \( \tau \in [D] \).

  and is defined by the map \( u \mapsto y \) given by
  \begin{align*}
    x(t) &= (u \ast \bigotimes_{\tau=1}^D e^{t^{(\tau)}\bm{A}^{(\tau)}} \bm{B}^{(\tau)})(t)
    \\
    y(t) &= \langle \bm{C}, x(t) \rangle
    .
  \end{align*}
  Note that at all times \( t = (t^{(1)}, \dots, t^{(D)}) \), the dimension of the state is \( x(t) \in \mathbbm{C}^{N_1 \times \dots \times N_D} \).
\end{definition}

Finally, it will be convenient for us to define ``partial bindings'' of \( u, x, y \) where one or more of the coordinates are fixed.

\begin{definition}[Partial binding of \( u \)]%
  Let \( I \subseteq [D] \) and let \( t^I = (t^{(I_1)}, \dots, t^{(I_{|I|})}) \) be an partial assignment of the time variables.
  Define \( u_I(t^I) \) to be the function \( \mathbbm{R}^{D-|I|} \mapsto \mathbbm{C} \) where the \( I \) indices are fixed to \( t^I \).

  For example, if \( D=3 \) and \( I = [1] \), then
  \begin{align*}
    u_I(t^I) = u(t^{(1)}, \cdot, \cdot)
  \end{align*}
  is a function from \( \mathbbm{R}^{2} \to \mathbbm{C} \) mapping \( (t^{(2)}, t^{(3)}) \mapsto u(t^{(1)}, t^{(2)}, t^{(3)}) \).
\end{definition}

\begin{definition}[Partial binding of \( x \)]%
  \begin{align*}
    x_I(t^I)(t^{-I}) &= (u_I(t^I) \ast \bigotimes_{\tau \in -I} e^{t^{(\tau)}\bm{A}^{(\tau)}} \bm{B}^{(\tau)})(t^{-I})
  \end{align*}
  Note that \( x_{[0]}() : \mathbbm{R}^D \mapsto \mathbbm{C}^{N_1 \times \dots \times N_D} \) coincides with the full state \( x \).
\end{definition}

The following more formal theorem shows the equivalence of this convolutional LTI system with a standard 1D SSM differential equation in each dimension.
\begin{theorem}%
  Given a partial state \( x_I(t^I) \) and a time variable \( t^{(\tau)} \) for \( \tau \not\in I \), let \( I' = I \cup \{\tau\} \).
  Then the partial derivatives satisfy
  \begin{align*}
    \frac{\partial x_I(t^{I})}{\partial t^{(\tau)}} (t^{-I})
    &= \bm{A}^{(\tau)} \cdot^{(\tau)} x_{I}(t^{I}) (t^{-I})
    \\&\qquad + \bm{B}^{(\tau)} \otimes x_{I'}(t^{I'}) (t^{-I'})
  \end{align*}
\end{theorem}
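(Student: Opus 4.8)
The strategy is to reduce to the one-dimensional SSM identity — that differentiating the causal convolution \( x(t) = \int_0^t e^{(t-s)\bm{A}}\bm{B}\,u(s)\,ds \) reproduces the ODE \( x'(t) = \bm{A}x(t) + \bm{B}u(t) \), i.e. the equivalence between \eqref{eq:ssm} and \eqref{eq:ssm-convolution} at the level of the state — and to apply it in the single variable \( t^{(\tau)} \), treating the convolution over the remaining free coordinates \( -I' := -I \setminus \{\tau\} \) as a fixed linear operator that commutes with everything in sight. Write \( I' = I \cup \{\tau\} \) and \( t^{I'} = (t^I, t^{(\tau)}) \) as in the statement.

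\emph{Step 1 (separate the \( \tau \)-direction).} Expanding \( x_I(t^I)(t^{-I}) \) as an iterated causal convolution over the coordinates in \( -I \) and invoking Fubini (legitimate under the mild integrability/continuity of \( u \) we assume throughout), perform the \( s^{(\tau)} \)-integral first: define the inner 1D state
\begin{align*}
  w(t^{(\tau)}; t^{-I'}) := \int_0^{t^{(\tau)}} e^{(t^{(\tau)} - s)\bm{A}^{(\tau)}}\bm{B}^{(\tau)}\, u(t^I, s, t^{-I'})\, ds \;\in\; \mathbbm{C}^{N^{(\tau)}},
\end{align*}
and let \( L \) denote the linear operator ``convolve over the coordinates \( -I' \) against \( \bigotimes_{\sigma \in -I'} e^{\,\cdot\,\bm{A}^{(\sigma)}}\bm{B}^{(\sigma)} \)''. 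Then \( x_I(t^I)(t^{-I}) = L[\,w\,](t^{-I}) \), up to the reordering that places the \( \tau \)-slot first; this is the only place where the informal notation \( \bm{B}^{(\tau)} \otimes (\cdot) \) in the statement should be read as ``insert \( \bm{B}^{(\tau)} \) into the \( \tau \)-th tensor axis''.

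\emph{Step 2 (apply the 1D identity and push \( L \) through).} For each fixed \( t^{-I'} \), \( w(\cdot; t^{-I'}) \) is precisely a standard 1D SSM state with parameters \( \bm{A}^{(\tau)}, \bm{B}^{(\tau)} \) and scalar input \( s \mapsto u(t^I, s, t^{-I'}) \), so the 1D equivalence gives
\begin{align*}
  \frac{\partial w}{\partial t^{(\tau)}}(t^{(\tau)}; t^{-I'}) = \bm{A}^{(\tau)} w(t^{(\tau)}; t^{-I'}) + \bm{B}^{(\tau)}\, u(t^I, t^{(\tau)}, t^{-I'}).
\end{align*}
Apply \( L \) to both sides. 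Since \( L \) convolves only over the coordinates \( -I' \), it commutes with \( \partial/\partial t^{(\tau)} \); since it acts only on tensor axes other than \( \tau \), it commutes with \( \bm{A}^{(\tau)} \cdot^{(\tau)} \) and with \( \bm{B}^{(\tau)} \otimes (\cdot) \). Hence
\begin{align*}
  \frac{\partial}{\partial t^{(\tau)}} L[w] = \bm{A}^{(\tau)} \cdot^{(\tau)} L[w] + \bm{B}^{(\tau)} \otimes L[\,u(t^I, t^{(\tau)}, \cdot)\,].
\end{align*}
By Step 1, \( L[w] = x_I(t^I)(t^{-I}) \); and \( L[\,u(t^I, t^{(\tau)}, \cdot)\,] \) is, directly from the definition of the partial binding of \( x \) with index set \( I' \), equal to \( x_{I'}(t^{I'})(t^{-I'}) \). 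Substituting gives exactly the claimed identity.

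\emph{Main obstacle.} Conceptually this is a one-liner (differentiate a convolution), so the work is entirely bookkeeping: (i) justifying Fubini and differentiation under the integral sign; (ii) checking that \( L \) really commutes with \( \partial_{t^{(\tau)}} \), \( \bm{A}^{(\tau)} \cdot^{(\tau)} \), and \( \bm{B}^{(\tau)} \otimes (\cdot) \), each of which is immediate once one tracks which tensor axis every object touches; and (iii) keeping the tensor-axis ordering straight so the informal \( \bm{B}^{(\tau)} \otimes x_{I'} \) is read as insertion into slot \( \tau \). An essentially equivalent route skips \( L \) and applies the Leibniz integral rule directly to the full iterated integral: the boundary term at the upper limit \( s^{(\tau)} = t^{(\tau)} \) collapses \( e^{0\cdot\bm{A}^{(\tau)}}\bm{B}^{(\tau)} = \bm{B}^{(\tau)} \) and leaves the \( -I' \)-integral, producing the \( \bm{B}^{(\tau)} \otimes x_{I'} \) term, while differentiating the integrand brings down an \( \bm{A}^{(\tau)} \) acting on axis \( \tau \), producing the \( \bm{A}^{(\tau)} \cdot^{(\tau)} x_I \) term.
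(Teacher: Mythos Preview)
Your proof is correct and the core idea---differentiate the \(\tau\)-convolution to recover the 1D SSM identity, then leave the remaining convolutions untouched---matches the paper's. The organization differs, however. The paper proceeds in a single chain: it pushes \(\partial/\partial t^{(\tau)}\) directly onto the tensor-product kernel and computes the distributional derivative of the causal factor, writing \(\frac{\partial}{\partial t^{(\tau)}}\big(e^{t^{(\tau)}\bm{A}^{(\tau)}}\bm{B}^{(\tau)}\big) = \bm{A}^{(\tau)} e^{t^{(\tau)}\bm{A}^{(\tau)}}\bm{B}^{(\tau)} + \bm{B}^{(\tau)}\delta(t^{(\tau)})\); convolution against the \(\delta\) then binds the \(\tau\)-coordinate and produces the \(x_{I'}\) term in one stroke. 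Your route instead Fubini-isolates the \(\tau\)-integral, invokes the 1D equivalence \eqref{eq:ssm}--\eqref{eq:ssm-convolution} as a black box, and then argues that the residual operator \(L\) commutes with everything acting on axis \(\tau\). This is slightly longer but more elementary (no distributions), and it makes explicit the commutation facts the paper's computation uses implicitly. Your closing ``equivalent route'' via the Leibniz rule is essentially the paper's computation with the \(\delta\) replaced by a boundary term.
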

\begin{proof}%
  WLOG assume \( I = [d] \), since all notions are permutation-independent.
  We will consider differentiating the state \( x_I \) with respect to the time variable \( t^{(d+1)} \).
  The key fact is that differentiating a convolution \( \frac{d}{dt} (f \ast g) \) is equivalent to differentiating one of the operands \( f \ast (\frac{d}{dt} g) \).
  \begin{align*}
    \frac{\partial x_{[d]}(t^{[d]})}{\partial t^{(d+1)}} (t^{-[d]}) &= \frac{\partial}{\partial t^{(d+1)}} (u_I(t^I) \ast \bigotimes_{\tau \in -I} e^{t^{(\tau)}\bm{A}^{(\tau)}} \bm{B}^{(\tau)})(t^{-I})
    \\&= (u_I(t^I) \ast \frac{\partial}{\partial t^{(d+1)}} \bigotimes_{\tau \in -[d]} e^{t^{(\tau)}\bm{A}^{(\tau)}} \bm{B}^{(\tau)})(t^{-I})
    \\&= \left(u_I(t^I) \ast \left( \bm{A}^{(d+1)} e^{t^{(d+1)}\bm{A}^{(d+1)}} \bm{B}^{(d+1)} + \bm{B}^{(d+1)}\delta(t^{(d+1)}) \right) \bigotimes_{\tau \in -[d+1]} e^{t^{(\tau)}\bm{A}^{(\tau)}} \bm{B}^{(\tau)}\right) (t^{-I})
    \\&= \bm{A}^{(d+1)} \cdot^{(d+1)} \left(u_I(t^I) \ast \bigotimes_{\tau \in -[d]} e^{t^{(\tau)}\bm{A}^{(\tau)}} \bm{B}^{(\tau)}\right) (t^{-I})
    \\&\qquad + u_{[d+1]}(t^{[d+1]}) \bm{B}^{(d+1)} \bigotimes_{\tau \in -[d+1]} e^{t^{(\tau)}\bm{A}^{(\tau)}} \bm{B}^{(d+1)}
    \\&= \bm{A}^{(d+1)} \cdot^{(d+1)} x_{[d]}(t^{[d]}) (t^{-[d]})
    \\&\qquad + \bm{B}^{(d+1)} \otimes x_{[d+1]}(t^{[d+1]}) (t^{-[d+1]})
  \end{align*}
\end{proof}

The following corollary follows immediately from linearity of the convolution operator, allowing the order of convolution and inner product by \( \bm{C} \) to be switched.
\begin{corollary}%
  The output \( y \) is equivalent to \( y = K \ast u \) where
  \begin{align*}
    K(t) = \langle \bm{C}, \bigotimes_{\tau=1}^D e^{t^{(\tau)}\bm{A}^{(\tau)}} \bm{B}^{(\tau)} \rangle
  \end{align*}
\end{corollary}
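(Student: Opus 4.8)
The plan is to work directly from the convolutional definition of the $D$-dimensional SSM given above, $x(t) = \bigl(u \ast \bigotimes_{\tau=1}^D e^{t^{(\tau)}\bm{A}^{(\tau)}}\bm{B}^{(\tau)}\bigr)(t)$ and $y(t) = \langle \bm{C}, x(t)\rangle$, and to observe that the entire content of the claim is that the pointwise linear contraction $\langle \bm{C}, \cdot\rangle$ commutes with convolution by a fixed kernel. I would not invoke the PDE formulation at all here; the preceding Theorem is what connects this convolutional object to the per-axis differential equations \eqref{eq:nd-ssm}, whereas the Corollary needs only the convolutional object itself, so it is logically independent of that Theorem.

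Concretely, first I would write the multidimensional convolution componentwise. Set $G(t) := \bigotimes_{\tau=1}^D e^{t^{(\tau)}\bm{A}^{(\tau)}}\bm{B}^{(\tau)} \in \mathbbm{C}^{N_1 \times \dots \times N_D}$; since $u$ is scalar-valued, convolution against $G$ acts entrywise, so for each multi-index $\bm{n}=(n_1,\dots,n_D)$ we have $x(t)_{\bm n} = \int_{\mathbbm{R}^D} u(s)\, G(t-s)_{\bm n}\, ds$. Next I would expand the contraction, $\langle \bm{C}, z\rangle = \sum_{\bm n}\bm{C}_{\bm n} z_{\bm n}$, and interchange the (finite) sum over $\bm n$ with the integral over $s$, which is immediate. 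This gives $y(t) = \sum_{\bm n}\bm{C}_{\bm n}\int_{\mathbbm{R}^D} u(s)\,G(t-s)_{\bm n}\, ds = \int_{\mathbbm{R}^D} u(s)\,\bigl(\sum_{\bm n}\bm{C}_{\bm n}G(t-s)_{\bm n}\bigr) ds = \int_{\mathbbm{R}^D} u(s)\,K(t-s)\, ds = (K \ast u)(t)$, where $K(t) = \langle \bm{C}, G(t)\rangle = \langle \bm{C}, \bigotimes_{\tau=1}^D e^{t^{(\tau)}\bm{A}^{(\tau)}}\bm{B}^{(\tau)}\rangle$ is exactly the asserted kernel; in particular this is the general-$D$ analogue of \eqref{eq:nd-kernel}. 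Since $K$ depends only on the parameters $\bm{A}^{(\tau)},\bm{B}^{(\tau)},\bm{C}$ and not on $t$ or on $u$, the map $u\mapsto y$ is convolution by a fixed kernel, hence shift-invariant, which also discharges the time-invariance assertion in \cref{thm:nd-ssm}.

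The only point needing a word of care is the interchange of $\langle \bm{C}, \cdot\rangle$ with the convolution integral: strictly this uses linearity together with whatever integrability makes the convolution well-defined in the first place (e.g. $u$ compactly supported, or the $\bm{A}^{(\tau)}$ chosen so each $e^{t^{(\tau)}\bm{A}^{(\tau)}}\bm{B}^{(\tau)}$ decays and $u$ is bounded, as in the causal S4 setting); in the discretized regime actually used the convolution collapses to a finite sum over grid points and the interchange is just rearranging a finite double sum. I do not expect a genuine obstacle here — the statement is "immediate from linearity of convolution," as advertised — so the main labor is purely notational: keeping the tensor indices straight and being explicit about what it means to convolve a scalar signal against a tensor-valued kernel.
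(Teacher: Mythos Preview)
Your proposal is correct and follows exactly the approach the paper takes: the paper's entire justification is the one sentence ``follows immediately from linearity of the convolution operator, allowing the order of convolution and inner product by \( \bm{C} \) to be switched,'' and your argument is precisely a spelled-out version of that interchange. Your additional remarks on integrability and on not needing the PDE formulation are accurate and do not diverge from the paper's route.
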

This completes the proof of \cref{thm:nd-ssm}.

Finally, \cref{cor:nd-ssm-factored} is an immediate consequence of the Kronecker mixed-product identity
\begin{align*}
  (A \otimes B)(C \otimes D) = (AC) \otimes (BD)
  .
\end{align*}

\section{Experimental Details}
\label{appendix:expt-details}

We use PyTorch for all experiments, and build on the publicly available \href{https://github.com/HazyResearch/state-spaces}{S4} code.

\subsection{Image Classification}
\label{appendix:expt-details-image}

ImageNet training: all models were trained from scratch with no outside data using 8 Nvidia A100 GPUs. For both ViT and ConvNeXt experiments, we follow the procedure from T2T-ViT \citep{yuan2021tokens} and the original ConvNeXt \citep{Liu2022ACF}, respectively, with minor adjustments. Preprocessing and dataloading was done using the TIMM \citep{rw2019timm} library. In S4ND-ViT, we turn off weight decay and remove the class token prepending of the input sequence. For the ConvNeXt models, we add RepeatAug \citep{repeataug}, as well as reduce the batch size to 3840 for \method-ConvNeXt. 

\method\ specific settings include a bidirectional S4 kernel followed by Goel et al \citep{goel2022sashimi}, and a state dimension of 64 for the SSMs.

\begin{table}[ht]
      \small
      \caption{{(\bf Performance on image classification.)}
        ImageNet settings for ViT and ConvNeXt baseline models.
      }
        \centering
        \begin{tabular}{lcc}
            \toprule
            & \textsc{ViT} & \textsc{ConvNeXt}\\
            \midrule
            image size & $224^2$ &\ $224^2$ \\
            optimizer & AdamW & AdamW \\
            optimizer momentum & $\beta_1,\beta_2=0.9,0.999$ & $\beta_1,\beta_2=0.9,0.999$ \\
            weight init & trunc. normal (std=0.02) & trunc. normal (std=0.02) \\
            base learning rate & 0.001 & 0.004 \\
            weight decay & 0.05 & 0.05 \\
            dropout & None & None \\
            batch size & 4096 & 4096 \\
            training epochs & 300 & 300 \\
            learning rate schedule & cosine decay & cosine decay \\
            warmup epochs & 10 & 20 \\
            warmup schedule & linear & linear \\
            layer-wise lr decay \citep{bao2021beit,clark2020electra} & None & None \\
            randaugment \citep{cubuk2020randaugment} & (9,0.5,layers=2) & (9,0.5,layers=2) \\
            mixup \citep{zhang2017mixup} & 0.8 & 0.8 \\
            cutmix \citep{yun2019cutmix} & 1.0 & 1.0 \\
            repeataug \citep{repeataug} & None & 3 \\
            random erasing \citep{zhong2020random} & 0.25 & 0.25 \\
            label smoothing \citep{szegedy2016rethinking} & 0.1 & 0.1 \\
            stochastic depth \citep{huang2016deep} & 0.1 & 0.1 \\
            layer scale \citep{touvron2021going} & None & 1e-6 \\
            head init scale \citep{touvron2021going} & None & None \\
            exp.mov. avg (EMA) \citep{polyak1992ema} & 0.9999 & None \\
            \bottomrule
        \end{tabular}
        \label{tab:imagenet_hparams}
\end{table}

\subsection{Video Classification}
\label{appendix:expt-details-video}

HMDB-51 training: all models were trained from scratch on a single Nvidia A100 GPU. We use the Pytorchvideo library for data loading and minimal preprocessing of RGB frames only (no optical flow). During training, we randomly sample 2 second clips from each video. At validation and test time, 2 second clips are sampled uniformly to ensure that an entire video is seen. For each ConvNeXt 3D video model (baselines and \method\ versions), we fix the hyperparameters in \cref{tab:hmdb_hparams}, and do a sweep over the RandAugment\citep{cubuk2020randaugment} settings \texttt{num\_layers}\( =\{1,2\} \) and \texttt{magnitude}\( =\{3,5,7,9\} \).

\method\ specific settings were similar to image classification, a bidirectional S4 kernel and a state dimension of 64 for the SSMs.

\begin{table}[ht]
      \small
      \caption{{(\bf Performance on video classification.)}
        HMDB-51 settings for all 3D ConvNeXt models (baselines and \method\ models).
      }
        \centering
        \begin{tabular}{@{}lcc@{}}
            \toprule
            & \textsc{ConvNeXt 3D} \\
            \midrule
            image size & $224^2$ \\
            \# frames & 30 \\
            clip duration (sec) & 2 \\
            optimizer & AdamW \\
            optimizer momentum & $\beta_1,\beta_2=0.9,0.999$ \\
            weight init & trunc. normal (std=0.02) \\
            base learning rate & 0.0001 \\
            weight decay & 0.2  \\
            dropout (head) & 0.2 \\
            batch size & 64 \\
            training epochs & 50 \\
            learning rate schedule & cosine decay \\
            warmup epochs & 0 \\
            stochastic depth \citep{huang2016deep} & 0.2 \\
            layer scale \citep{touvron2021going} & 1e-6 \\
            head init scale \citep{touvron2021going} & None \\
            exp.mov. avg (EMA) \citep{polyak1992ema} & None \\
            \bottomrule
        \end{tabular}
        \label{tab:hmdb_hparams}
\end{table}

\subsection{Continuous Time Capabilities Experiments}
\label{appendix:expt-details-continuous}

We use the CIFAR-10 and Celeb-A datasets for all our experiments. Below, we include all experimental details including data processing, model training and hyperparameters, and evaluation details.

As noted in \cref{tab:dataset-resolutions}, we consider $3$ resolutions for each dataset, a \baseres\ resolution that is considered the standard resolution for that dataset, as well as two lower resolutions \lowres\ and \midres. Images at the lower resolutions are generated by taking an image at the \baseres\ resolution, and then downsampling using the \texttt{resize} function in \texttt{torchvision}, with bilinear interpolation and antialiasing turned on.

{\bf CIFAR-10.}  The base resolution is chosen to be $32 \times 32$, which is the resolution at which models are generally trained on this dataset. The lower resolutions are $16 \times 16$ and $8 \times 8$. 
We use no data augmentation for either zero-shot resolution change or progressive resizing experiments.
We train with standard cross-entropy loss on the task of $10$-way classification. 
For reporting results, we use standard accuracy over the $10$ classes.

We use a simple isotropic model backbone, identical to the one used by~\citet{gu2022efficiently} for their (1D) S4 model. The only difference is that we use either the \method\ or Conv2D layer in place of the S4 block, which ensures that the model can accept a batch of $2$D spatial inputs with channels. For results reported in the main body, we use a $6 \times 256$ architecture consisting of $6$ layers and a model dimension of $256$.

For all CIFAR-10 experiments, we train for $100$ epochs, use a base learning rate of $0.01$ and a weight decay of $0.03$.

{\bf Celeb-A.}
On Celeb-A, we consider $160 \times 160$ as the base resolution, and $128 \times 128$ and $64 \times 64$ as the lower resolutions.  
Images on Celeb-A are generally $218 \times 178$, and we run a $\mathrm{CenterCrop(178)} \rightarrow \mathrm{Resize(160)}$ transform in \texttt{torchvision} to resize the image.
We use no data augmentation for either zero-shot resolution change or progressive resizing experiments. 
We train with standard binary cross-entropy loss on the task of $40$-way multilabel attribute classification on Celeb-A. 
For reporting results, we use the multilabel binary accuracy, i.e. the average binary accuracy across all $40$ tasks.

Similar to our ImageNet experiments, we use a ConvNeXt model as the basic backbone for experiments on Celeb-A. However, use a particularly small model here, consisting of $4$ stages with depths $(3, 3, 3, 3)$ and corresponding model dimensions $(64, 128, 256, 512)$. For \method, we simply follow the same process as ImageNet, replacing all depthwise Conv2D layers with \method. A minor difference is that we use a global convolution in the stem downsampling for \method, rather than the smaller kernel size that we use for ImageNet. We use a drop path rate of $0.1$ for both \method\ and the Conv2D baseline in all experiments.

For all Celeb-A experiments, we train for $20$ epochs, use a base learning rate of $0.004$, and use automatic mixed precision for training.

{\bf Other fixed hyperparameters.} For all experiments, we use bidirectional S4 kernels following \citet{goel2022sashimi}, and use a state dimension of $64$ for the S4 SSMs. We use AdamW as the optimizer.

\subsubsection{Zero-Shot Resolution Change}
For \cref{tab:zero-shot-resolution-change-table}, we simply train on a single resolution among \baseres, \midres\ and \lowres, and then directly test at all higher resolutions. For model selection at a particular test resolution, we select the best performing model for that test resolution i.e. we checkpoint the model at the epoch where it performs best at the test resolution in question. Note that we refer to ``test resolutions" but report validation metrics in \cref{tab:zero-shot-resolution-change-table}, as is standard practice for additional experiments and ablations.

{\bf CIFAR-10.}
We use a batch size of $50$, and for both methods (and all resolutions), we use a cosine decay schedule for the learning rate with no restarts and a length of $100000$, and a linear warmup of $500$ steps.

For baseline Conv2D hyperparameters, we compare the performance of Conv2D with and without depthwise convolutions. Otherwise, all hyperparameters are fixed to the common values laid out in the previous section (and are identical to those used for \method).

For \method\ hyperparameters, we sweep the choice of initialization for the state space parameters $A, B$ in the S4 model among $\{\text{legs, fourier, random-linear, random-inv}\}$\footnote{These correspond to initializations of the model that allow the kernel to be expressed as a combination of different types of basis functions. For more details, we refer the reader to \citet{gu2020hippo,gu2022efficiently}.} and sweep bandlimit ($\alpha$) values among $\{0.05, 0.10, 0.20, 0.50, \infty\}$.

{\bf Celeb-A.} 
We use a batch size of $256$ when training on the lower resolutions, and $128$ when training at the \baseres\ resolution. For both methods (and all resolutions), we use a cosine decay schedule for the learning rate with no restarts and a length of $13000$ (batch size $256$) or $26000$ (batch size $128$), and a linear warmup of $500$ steps. 

For baseline ConvNeXt hyperparameters, we use a weight decay of $0.1$ after an initial sweep over weight decay values $\{0.05, 0.1, 0.2, 0.5\}$.

For \method\ hyperparameters, we sweep the choice of initialization for the state space parameters $A, B$ in the S4 model among $\{\text{legs, fourier, random-linear, random-inv}\}$, sweep bandlimit ($\alpha$) values among $\{0.05, 0.10, 0.20, 0.50, \infty\}$, and use a weight decay of $1.0$. The value of the weight decay was chosen based on an initial exploratory sweep over weight decay values $\{0.1, 0.2, 0.5, 1.0, 5.0\}$.

\subsubsection{Progressive Resizing}
For \cref{tab:progressive-resizing-results}, we train with a resizing schedule that progressively increases the resolution of the data. We report all metrics on validation sets, similar to our zero-shot experiments. We only use $2$ resizing stages in our experiments, as we found in preliminary experiments that $2+$ stages had little to no benefit. 

All hyperparameters and training details are identical to the zero-shot resolution change experiments, except for those we describe next. When training with multiple stages, we reset the cosine learning rate scheduler at the beginning of each stage (along with the linear warmup). The length of the scheduler is varied in proportion to the length of the stage, e.g. on CIFAR-10, for a stage $50$ epochs long, we change the length of the decay schedule from $100000$ steps to $50000$ steps (and similarly for Celeb-A). We always use a linear warmup of $500$ steps for each stage. Additionally, \cref{tab:progressive-resizing-results} denotes the length (in epochs) of each stage, as well as their training resolutions.

Another important detail is that we set the bandlimit parameter carefully for each stage. We found that bandlimiting in the first stage is critical to final performance, and without bandlimiting the performance on the \baseres\ resolution is substantially worse. Bandlimiting is generally not useful for a stage if it is training at the \baseres\ resolution i.e. at the resolution that we will be testing at (in our experiments, this is the case only for second stage training on CIFAR-10).

On CIFAR-10, we use a bandlimit $\alpha$ of $0.10$ or $0.20$ for the first stage, depending on whether the first stage was training at $8 \times 8$ (\lowres) or $16 \times 16$ (\midres) respectively. In the second stage, we always train at the \baseres\ resolution and use no bandlimiting, since as stated earlier, it has a negligible effect on performance. 

For Celeb-A, we set the bandlimit parameter to $0.1$ for both stages. Note that on Celeb-A, we do not train at the \baseres\ resolution at all, only training at \lowres\ in the first stage and \midres\ in the second stage.


\section{Discussion}
\label{appendix:discussion}

\paragraph{Runtime profiling and potential optimization.}
We profile the runtime of an S4ND layer on an A100 GPU, and plot the time taken by each operation in~\cref{fig:s4nd_profile}.
We see that the majority of the time (between 65\% and 80\%) are taken by the FFT, pointwise operation, and inverse FFT.
The higher-dimensional FFT is a very standard scientific primitive that should be substantially optimizable.

These operations are memory-bound, that is, the runtime is dominated the time to read/write to GPU memory.
There is currently no library support for fusing these steps, so for each of those steps the data has to be loaded from GPU memory, arithmetic operations are performed, then the result is written back to GPU memory.
One potential optimization is \emph{kernel fusion}: the input could be loaded once, all the arithmetic operations for all three steps are performed, then the final result is written back to GPU memory.
We expect that with library support for such optimization, the S4ND runtime can be reduced by 2-3$\times$.
\begin{figure}[ht]
  \centering
  \includegraphics[width=0.7\textwidth]{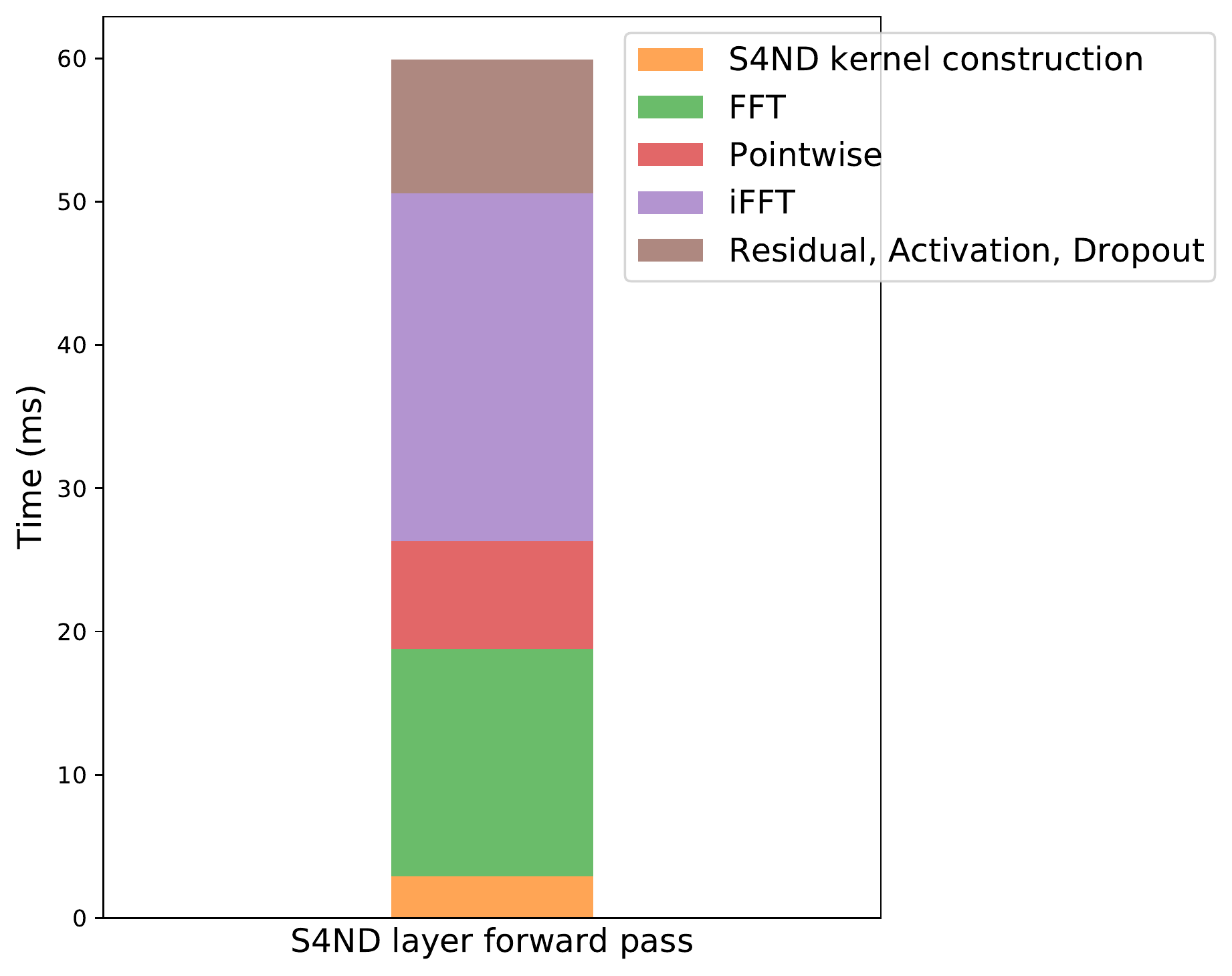}
  \caption{\label{fig:s4nd_profile}Timing breakdown of different steps in the S4ND forward pass, for a batch of 64 inputs, each of size 224 $\times$ 224, on an A100 GPU. FFT, pointwise operation, and inverse FFT take between 65\% and 80\% of the time.}
\end{figure}

\end{document}